\newcommand{\E}{\mathbb{E}}
\setlist{nosep,after=\vspace{\baselineskip}}
\DeclareMathOperator*{\argmin}{arg\,min}
\newtheorem{definition}{Definition}
\newtheorem{lemma}{Lemma}
\newtheorem{corollary}{Corollary}
\newtheorem{proposition}{Proposition}
\newtheorem{theorem}{Theorem}
\def\conv{\rm{conv}}
\newcommand{\mat}[1]{\mathbf{#1}}
\begin{document} 
\date{\today}
\title{Cost-Optimal Learning of Causal Graphs}
\author[1,*]{Murat Kocaoglu}
\author[1,\textdagger]{Alexandros G. Dimakis}
\author[1,\textdaggerdbl]{Sriram Vishwanath}
\affil[1]{\small Department of Electrical and Computer Engineering, The University of Texas at Austin, USA}
\affil[ ]{\small \textit \textsuperscript{*} mkocaoglu@utexas.edu \textsuperscript{\textdagger}dimakis@austin.utexas.edu  \textsuperscript{\textdaggerdbl}sriram@ece.utexas.edu}
\renewcommand\Authands{ and }

\maketitle

\begin{abstract}
We consider the problem of learning a causal graph over a set of variables with interventions. We study the cost-optimal causal graph learning problem: For a given skeleton (undirected version of the causal graph), design the set of interventions with minimum total cost, that can uniquely identify any causal graph with the given skeleton. We show that this problem is solvable in polynomial time. Later, we consider the case when the number of interventions is limited. For this case, we provide polynomial time algorithms when the skeleton is a tree or a clique tree. For a general chordal skeleton, we develop an efficient greedy algorithm, which can be improved when the causal graph skeleton is an interval graph. 
\end{abstract}

\section{Introduction}
Causal inference is important for many applications including, among others, biology, econometrics and medicine~\cite{Chalupka2017,Wentrup2016, Ramsey2010}.
Randomized trials are the golden standard for causal inference since they lead to reliable conclusions with minimal assumptions. 
The problem is that enforcing randomization to different variables in a causal inference problem can have significant and varying costs. 
A causal discovery algorithm should take these costs into account and optimize experiments accordingly. 

In this paper we formulate this problem of learning a causal graph when there is a cost for intervening on each variable.
We follow the structural equation modeling framework~\cite{Pearl2009,Spirtes2001} and use interventions, i.e., experiments. To perform each intervention, a scienstist randomizes a set of variables and collects new data from the perturbed system. 
For example, suppose the scientist wants to observe the causal effect of the variable \textit{smoking} on the variable \textit{cancer}. Suppose she decides to perform an intervention on the \textit{smoking} variable. This entails forcing a random subset of participants to smoke, irrespective of them being smokers or non-smokers. This intervention would be (clearly) hard to perform, so the cost of intervening on the variable \textit{smoking}
should be set pretty high. An intervention on the second variable \textit{cancer} would be physically impossible, since there is no mechanism for the scientist to enforce this variable to take a value for a participant. We should therefore, be setting the cost of intervening on the \textit{cancer} variable to infinity. 

In this paper we study the following problem: We want to learn a causal graph and for each variable we are given a cost. For each intervention set, the cost is the sum of the costs of all the variables in the set. 
Total cost is the sum of the costs of the performed interventions. We would like to learn a causal graph with the minimum possible total cost. 

\noindent \textbf{Our Contributions:}
This is a natural problem that, to the best of our knowledge, has not been previously studied except for some special cases as we explain in the related work section. Our results are as follows: 
\begin{itemize} 
\item We show that the problem of designing the minimum cost interventions to learn a causal graph can be solved in polynomial time. 
\item We study the minimum cost intervention design problem when the number of interventions is limited. We formulate the cost-optimum intervention design problem as an integer linear program. This formulation allows us to identify two causal graph families for which the problem can be solved in polynomial time. 
\item For general graphs, we develop an efficient greedy algorithm. We also propose an improved variant of this algorithm, which runs in polynomial time when the causal graph is an interval graph. 
\end{itemize}

Our machinery is graph theoretic. We rely on the connection between graph separating systems and proper colorings. Although this connection was previously discovered, it does not seem to be widely known in the literature. 

\section{Background and Notation}
\label{sec:background}
In this section, we present a brief overview of Pearl's causality framework and illustrate how interventions are useful in identifying causal relations. We also present the requisite graph theory background. Finally, we explain separating systems: Separating systems are the central mathematical objects for non-adaptive intervention design.

\subsection{Causal Graphs, Interventions and Learning}
A causal graph  is a directed acyclic graph (DAG), where each vertex represents a random variable of the causal system. Consider a set of random variables $V$. A directed acyclic graph $D = (V,E)$ is a causal graph if the arrows in the edge set $E$ encode direct causal relations between the variables: A directed edge $X\rightarrow Y$ represents a direct causal relation between $X$ and $Y$. $X$ is said to be a direct cause of $Y$. In the structural causal modeling framework (also called structural equations with independent errors), every variable $X$ can be written as a deterministic function of its parent set in the causal graph $D$ and some unobserved random variable $E_X$. $E_X$ is called an exogenous variable and it is statistically independent from the non-descendants of $X$. Thus $X=f(Pa(X),E)$ where $Pa(X)$ is the set of the parents of $X$ in $D$ and $f$ is some deterministic function. We assume that the graph is acyclic\footnote{Treatment of cyclic graphs require mechanics different than independent exogenous variables, or a time varying system, and is out of the scope of this paper.} (DAG) and all the variables are observable (causal sufficiency). 

The functional relations between the observed variables and the exogenous variables induce a joint probability distribution over the observed variables. It can be shown that the underlying causal graph $D$ is a valid Bayesian network for the joint distribution induced over the observed variables by the causal model. To identify the causal graph, we can check the conditional independence relations between the observed variables. Under the faithfulness assumption \cite{Spirtes2001}, every conditional independence relation is equivalent to a graphical criterion called the \emph{d-separation} \footnote{The set of unfaithful distributions are shown to have measure 0. This makes faithfulness a widely employed assumption, even though it was recently shown that almost faithful distibutions may have significant measure \cite{Uhler2013}.}.

In general, there is no unique Bayesian network that corresponds to a given joint distribution: There exists multiple Bayesian networks for a given set of conditional independence relations. Thus, it is not possible to uniquely identify the underlying causal graph using only these tests in general. However, conditional independence tests allow us to identify a certain induced subgraph: \emph{Immoralities}, i.e., induced subgraphs on three nodes of the form $X\rightarrow Z\leftarrow Y$. An undirected graph $G$ is called the skeleton of a causal directed graph $D$, if every edge of $G$ corresponds to a directed edge of $D$, and every non-edge of $G$ corresponds to a non-edge of $D$. PC algorithm \cite{Spirtes2001} and its variants use conditional independence tests: They first identify the graph skeleton, and then determine all the immoralities. The runtime is polynomial if the underlying graph has constant vertex degree.

The set of invariant causal edges are not only those that belong to an immorality. For example, one can identify additional causal edges based on the fact that the graph is acyclic. Meek developed a complete set of rules in \cite{Meek1995a,Meek1995b} to identify every invariant edge direction, given a set of causal edges and the skeleton. Meek rules can be iteratively applied to the output of the PC algorithm to identify every invariant arrow. The graph that contains every invariant causal arrow as a directed edge, and the others as undirected edges is called the essential graph of $D$. Essential graphs are shown to contain undirected components which are always \emph{chordal} \footnote{A graph is chordal if its every cycle of length 4 or more contains a chord.}\cite{Spirtes2001,Hauser2012a} .

Performing experiments is the most definitive way to learn the causal direction between variables. Randomized clinical trials, which aim to measure the causal effect of a drug are examples of such experiments. In Pearl's causality framework, an experiment is captured through the \emph{do} operator: The \emph{do} operator refers to the process of assigning a particular value to a set of variables. An \emph{intervention} is an experiment where the scientist collects data after performing the \emph{do} operation on a subset of variables. This process is fundamentally different from conditioning, and requires scientist to have the power of changing the underlying causal system: For example, by forcing a patient not to smoke, the scientist removes the causal effect of the patient's urge to smoke which may be caused by a gene. An intervention is called perfect if it does not change any other mechanism of the causal system and only assigns the desired value to the intervened variable. A stochastic intervention assigns the value of the variable of interest to the realizations of another variable instead of a fixed value. The assigned variable is independent from the other variables in the system. This is represented as $do(X=U)$ for some independent random variable $U$. 

Due to the change of the causal mechanism, an intervention removes the causal arrows from $Pa(X)$ to $X$. This change in the graph skeleton can be detected by checking the conditional independences in the post-interventional distribution: The edges still adjacent to $X$ must have been directing away from $X$ before the experiment. The edges that are missing must have been the parents of $X$. Thus, an intervention on $X$ enables us to learn the direction of every edge adjacent to $X$. Similarly, intervening on a set of nodes $S\subseteq V$ concurrently enables us to learn the causal edges across the cut $(S,S^c)$.

Given sufficient data and computation power, we can apply the PC algorithm and Meek rules to identify the essential graph. To discover the rest of the graph we need to use interventions on the undirected components. We assume that we work on a single undirected component after this preprocessing step. Hence, the graphs we consider are chordal without loss of generality, since these components are shown to always be chordal \cite{Hauser2012a}. After each intervention, we also assume that the scientist can apply the PC algorithm and Meek rules to uncover more edges. A set of interventions is said to learn a causal graph given skeleton $G$, if every causal edge of any causal graph $D$ with skeleton $G$ can be identified through this procedure. A set of $m$ interventions is called an \emph{intervention design} and is shown by $\mathcal{I} = \{I_1,I_2,\hdots,I_m\}$, where $I_i\subset V$ is the set of nodes intervened on in the $i^{th}$ experiment. 

An intervention design algorithm is called non-adaptive if the choice of an intervention set does not depend on the outcome of the previous interventions. Yet, we can make use of the Meek rules over the hypothetical outcomes of each experiment. Adaptive algorithms design the next experiment based on the outcome of the previous interventions. Adaptive algorithms are in general hard to design and analyze and sometimes impractical when the scientist needs to design the interventions before the experiment starts. 

In this paper we are interested in the problem of learning a causal graph given its skeleton where each variable is associated with a cost. The objective is to non-adaptively design the set of interventions that minimizes the total interventional cost. We prove that, any set of interventions that can learn every causl graph with a given skeleton needs to be a graph separating system for the skeleton. This is the first formal proof of this statement to the best of our knowledge. 

\subsection{Separating systems, Graphs, Colorings}
A separating system on a set of elements is a collection of subsets with the following property: For every pair of elements from the set, there exists at least one subset which contains exactly one element from the pair:
\begin{definition} 
For set $V = [n]\coloneqq \{1,2,\hdots, n\}$, a collection of subsets of $V$, $\mathcal{I} = \{I_1,I_2,\hdots I_m\}$, is called a separating system if for every pair $u,v\in V$, $\exists i\in [m]$ such that either $u\in I_i$ and $v\notin I_i$, or $u\notin I_i$ and $v\in I_i$.
\end{definition}
The subset that contains exactly one element from the pair is said to separate the pair. The number of subsets in the separating system is called the size of the separating system. We can represent a separating system with a binary matrix:
\begin{definition}
Consider a separating system $\mathcal{I} = \{I_1,I_2,\hdots I_m\}$ for the set $[n]$. A binary matrix $\mat{M}\in\{0,1\}^{n\times m}$ is called the separating system matrix for $\mathcal{I}$ if for any element $j\in[n]$, $\mat{M}(j,i) = 1$ if $j\in I_i$ and 0 otherwise.
\end{definition}
Thus, each set element has a corresponding row coordinate, and the rows represent the set membership of these elements. Each column of $\mat{M}$ is a 0-1 vector that indicates which elements belong to the set corresponding to that column. See Figure \ref{fig:sepsys} for two examples. The definition of every pair being separated by some set then translates to every row of $M$ being different.

Given an undirected graph, a graph separating system is a separating system that separates every edge of the graph.
\begin{definition}
Given an undirected graph $G=([n],E)$, a set of subsets of $[n]$, $\mathcal{I} = \{I_1,I_2,\hdots I_m\}$, is a $G$-separating system if for every pair $u,v\in [n]$ for which $(u,v)\in E$, $\exists i\in [m]$ such that either $u\in I_i$ and $v\notin I_i$, or $u\notin I_i$ and $v\in I_i$.
\end{definition}
Thus, graph separating systems only need to separate pairs of elements adjacent in the graph. Graph separating systems are considered in \cite{Cheng1984}. It was shown that the size of the minimum graph separating system is $\lceil\log{\chi}\rceil$, where $\chi$ is the coloring number of $G$. Based on this, we can trivially extend the definition of separating system matrices to include graph separating systems.

A coloring of an undirected graph is an assignment of a set of labels (colors) to every vertex. A coloring is called proper if every adjacent vertex is assigned to a different color. A proper coloring for a graph is optimal if it is the proper coloring that uses the minimum number of colors. The number of colors used by an optimal coloring is the chromatic number of the graph. Optimum coloring is hard to find in general graphs, however it is in $P$ for perfect graphs. Since chordal graphs are perfect, the graphs we are interested in in this paper can be efficiently colored using minimum number of colors. For a given undirected graph $G=(V,E)$, the vertex induced subgraph on $S\subset V$ is shown by $G_S=\overline{(S,E)}$. 

\section{Related Work}
The framework of learing causal relations from data has been extensively studied under different assumptions on the causal model. \emph{The additive noise assumption} asserts that the effect of the exogenous variables are additive in the structural equations. Under the additional assumptions that the data is Gaussian and that the exogenous variables have equal variances, \cite{Peters2014} shows that the causal graph is identifiable. Recently, under the additive linear model with jointly Gaussian variables \cite{Peters2016} proposed using the invariance of the causal relations to combine a given set of interventional data.

For the case of two variable causal graphs, there is a rich set of theoretical results for data-driven learning:  \cite{Hoyer2008} and  \cite{Shimizu2006} show that we can learn a two-variable causal graph under different assumptions on the function or the noise term under the additive noise model. Alternatively, an information geometric appraoch that is based on the \emph{independence of cause and effect} is suggested by \cite{Janzing2012}. \cite{Paz2015a} recently proposed using a classifier on the datasets to label each dataset either as $X$ causes $Y$ or $Y$ causes $X$. The lack of large real causal datasets forced him to generate artificial causal data, which makes this approach dependent on the data generation process.  

Information theoretic causal inference approaches have gained increased attention recently \cite{Ziebart2013,Gao2016}. For time-series data along with Granger causality, directed information is used ~\cite{Granger1969,Etesami2016,Quinn2015,Kontoyiannis2016,Raginsky2011}. An entropic causal inference framework is recently proposed for the two-variable causal graphs by \cite{Kocaoglu2017}.

The literature on learning causal graphs using interventions without assumptions on the causal model is more limited. For the objective of minimizing the number of experiments, \cite{Hauser2012b} proposes a coloring-based algorithm to construct the optimum set of interventions. \cite{Eberhardt2005} introduced the constraint on the number of variables intervened in each experiment. He proved in \cite{EberhardtThesis} that, when all causal graphs are considered, the set of interventions to fully identify the causal DAG needs to be a separating system for the set of variables. For example for complete graphs, separating systems are necessary. \cite{Hyttinen2013}  draws connections between the combinatorics literature and causality via known separating system constructions. \cite{Shanmugam2015} illustrates several theoretical findings: They show that the separating systems are necessary even under the constraint that each intervention has size at most $k$, identify an information theoretic lower bound on the necessary number of experiments, propose a new $(n,k)$ separating system construction, and develop an adaptive algorithm that exploits the Meek rules. To the best of our knowledge, the fact that a graph separating system is necessary for a given causal graph skeleton was unknown until this work.

\section{Graph Separating Systems, Proper Colorings and Intervention Design}
In this section, we illustrate the relation between graph colorings and graph separating systems, and show how they are useful for non-adaptive intervention design algorithms.

Given a graph separating system $\mathcal{I} = \{I_1,I_2,\hdots,I_m\}$ for the skeleton $G$ of a causal graph, we can construct the set of interventions as follows: For experiment $i$, intervene on the set of variables in the set $I_i$. Since $\mathcal{I}$ is a graph separating system, for every edge in the skeleton, there is some $i$ for which $I_i$ intervenes on only one of the variables adjacent to that edge. Since the edge is cut, it can be learned by learning the skeleton of the post-interventional graph, as explained in Section \ref{sec:background}. Since every edge is cut at least once, an intervention design based on a $G$-separating system identifies any causal graph with skeleton $G$.

Graph separating systems provide a structured way of designing interventions that can learn any causal graph. Their necessity however is more subtle: One might suspect that using the Meek rules in between every intervention may eliminate the need for the set of interventions to correspond to a graph separating system. Suppose we designed the first $i-1$ experiments. Applying the Meek rules over all possible outcomes of our first $i-1$ experiments on $G$ may enable us to design the $m$th experiment in an informed manner, even though we do not get to see the outcome of our experiments. Eventually it might be possible to uncover the whole graph without having to separate every edge. In the following we show that Meek rules are not powerful enough to accomplish this, and we actually need a graph separating system. This fact seems to be known \cite{EberhardtThesis,Hauser2012b}, however we could not locate a proof. We provide our own proof:
\begin{theorem}
\label{thm:SepSysNecessary}
Consider an undirected graph $G$. A set of interventions $\mathcal{I}$ learns every causal graph $D$ with skeleton $G$ if and only if $\mathcal{I}$ is a graph separating system for $G$.
\end{theorem}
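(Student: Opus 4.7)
The plan is to split the theorem into two directions. The forward direction $(\Leftarrow)$ is essentially the informal discussion immediately preceding the statement: if $\mathcal{I}$ is a $G$-separating system then every edge of $G$ has some $I_i$ cutting it, and cut edges are oriented because an incoming arc $u \to v$ with $u \notin I_i$, $v \in I_i$ vanishes after the intervention while an outgoing arc $v \to u$ with the same cut is retained. For the converse $(\Rightarrow)$ I would argue the contrapositive: assuming that $\mathcal{I}$ is \emph{not} a $G$-separating system, I would exhibit two distinct DAGs $D_1 \neq D_2$ with skeleton $G$ whose observational and $\mathcal{I}$-interventional distributions coincide, so that $\mathcal{I}$ cannot learn every causal graph with skeleton $G$.

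Fix an edge $(u,v) \in E(G)$ that no $I_i \in \mathcal{I}$ separates, so each $I_i$ contains both of $u,v$ or neither. The skeleton $G$ is chordal without loss of generality (Section \ref{sec:background}), so it admits linear orderings $\tau$ of $V$ such that orienting every edge from the earlier to the later endpoint in $\tau$ yields an immorality-free DAG; these are exactly the reverses of perfect elimination orderings of $G$. I would construct such a $\tau = (u, v, w_3, \ldots, w_n)$ by running Maximum Cardinality Search starting at $u$ and breaking the first tie in favor of the adjacent vertex $v$. MCS produces a reverse PEO on any chordal graph, and placing $u,v$ in the first two positions is consistent because the earlier-neighbor sets at positions $1$ and $2$ are $\emptyset$ and $\{u\}$, both trivially cliques. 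Let $D_1$ be the DAG associated to $\tau$, which is immorality-free by construction, and define $D_2$ by reversing only the arc $(u,v)$ in $D_1$.

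I would then verify three properties. First, $D_2$ is acyclic: any cycle in $D_2$ must use the new arc $v \to u$ and therefore yield a directed $u$-to-$v$ path in $D_1$ that avoids the edge $u \to v$, but since $v$ occupies position $2$ of $\tau$ every other vertex is later than $v$ and can only be a descendant of $v$ in $D_1$, which rules out any such path. Second, $D_2$ is immorality-free: the only parent sets that change are $\mathrm{Pa}(u)$, which becomes the singleton $\{v\}$ (trivially a clique), and $\mathrm{Pa}(v)$, which merely shrinks. Third, $D_1$ and $D_2$ are indistinguishable under $\mathcal{I}$: they share the skeleton $G$ and have identical (empty) immorality sets, placing them in the same observational Markov equivalence class; and since they differ only on $(u,v)$, which is not cut by any $I_i$, for each intervention $I_i$ the graphs obtained from $D_1$ and $D_2$ by removing all arcs into $I_i$ also share their skeletons and remain immorality-free (removing arcs into $I_i$ cannot create new immoralities), so all post-interventional distributions agree. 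The main obstacle is producing the ordering $\tau$ with $u,v$ in the first two positions; once it is in hand everything else is a brief check of parent-set cliques, and the whole argument leans on $G$ being chordal so that MCS applies.
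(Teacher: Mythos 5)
Your overall strategy is sound and is a genuinely more self-contained route than the paper's. The paper handles the necessity direction by citing a lemma implicit in Shanmugam et al.\ (its Lemma~\ref{lem:shanmugam}): for any clique (here, the unseparated edge, a clique of size~2) there is an immorality-free DAG with skeleton $G$ whose partial order starts with that clique, and Meek rules applied to edges learned outside the clique cannot orient edges inside it. You instead construct the witnessing pair $D_1, D_2$ explicitly --- an MCS/reverse-PEO ordering with $u,v$ in the first two positions, orient along it, and flip the single arc $u\to v$ --- and then verify interventional indistinguishability directly via skeletons and immoralities of the mutilated graphs. The underlying idea (put the endpoints of the unseparated edge first in the order of an immorality-free orientation) is the same, but your version replaces the appeal to prior work by an explicit check, which is arguably a cleaner, self-contained proof of exactly the special case needed; the paper's version is shorter and covers the clique statement in full generality.

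One step as written is wrong, though the damage is local and repairable from facts you already establish. The parenthetical claim that ``removing arcs into $I_i$ cannot create new immoralities,'' and hence that $D_1^{(I_i)}, D_2^{(I_i)}$ \emph{remain immorality-free}, is false in general: if $a\to c\leftarrow b$ with $a\to b$ in an immorality-free DAG and $b\in I_i$, $c\notin I_i$, then deleting $a\to b$ creates the immorality $a\to c\leftarrow b$ in the mutilated graph (e.g.\ a triangle oriented $a\to b$, $a\to c$, $b\to c$ with $I_i=\{b\}$). What you actually need is not that the mutilated graphs are immorality-free but that $D_1^{(I_i)}$ and $D_2^{(I_i)}$ have \emph{the same} skeleton and \emph{the same} immoralities, and this does hold in your construction: the two mutilated graphs differ at most in the orientation of $u$--$v$ (when $u,v\notin I_i$; when $u,v\in I_i$ they are identical), any immorality not converging along that arc appears in both since all other arcs and the common skeleton coincide, and no immorality can converge along it because $\mathrm{Pa}_{D_1}(v)=\{u\}$ and $\mathrm{Pa}_{D_2}(u)=\{v\}$, so neither head has a second parent. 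Replacing the false parenthetical with this two-line argument makes the post-interventional equivalence claim correct, and the rest of your proof goes through.
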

\begin{proof}
See the appendix.
\end{proof}

\subsection{Any Graph Separating System is \emph{Some} Coloring}
\label{sec:somecoloring}
In this section, we explain the relation between graph separating systems and proper graph colorings. This relation, which is already known \cite{Hauser2012b}, is important for us in reformulating the intervention design problem in the later sections.

Let $C:V\rightarrow \{0,1\}^m$ be a proper graph coloring for graph $G$ which uses $c$ colors in total. The colors are labeled by length-$m$ binary vectors. First construct matrix $\mat{M}$ as follows: Let $i^{th}$ row of $\mat{M}$ be the label corresponding to the color of vertex $i$, i.e., $C(i)$. Then $\mat{M}$ is a $G$-separating system matrix: Let $I_i$ be the set of row indices of $\mat{M}$ for which the corresponding entries in the $i^{th}$ column are 1. Let $\mathcal{I}=\{I_1,I_2,\hdots, I_m\}$ be the set of subsets constructed in this manner from $m$ columns of $\mat{M}$. Then $\mathcal{I}$ is a graph separating system for $G$. To see this, consider any pair of vertices $u, v$ that are adjacent in $G$: $(u,v)\in E$. Since the coloring is proper, the color labels of these vertices are different, which implies the corresponding rows of $\mat{M}$, $\mat{M}(u,:)$ and $\mat{M}(v,:)$ are different. Hence, there is some column of $\mat{M}$ which is 1 in exactly one of the $u^{th}$ and $v^{th}$ rows. Thus, the subset constructed from this column separates the pair of vertices $u,v$.

Therefore any proper graph coloring can be used to construct a graph separating system. It turns out that the converse is also true: Any graph separating system can be used to construct a proper graph coloring. This is shown by Cai in \cite{Cheng1984} within his proof that shows that the minimum size of a graph separating system is $\lceil\log{\chi}\rceil$, where $\chi$ is the chromatic number. We repeat this result for completeness\footnote{Note that this lemma is not formally stated in \cite{Cheng1984} but rather verbally argued within a proof of another statement.}:
\begin{lemma}[\cite{Cheng1984}]
\label{lem:cai}
Let $\mathcal{I}=\{I_1,I_2,\hdots, I_m\}$ be a graph separating system for the graph $G=(V,E)$. Let $\mat{M}$ be the separating system matrix for $\mathcal{I}$: $i^{th}$ column of $\mat{M}$ is the binary vector of length $|V|$ which is 1 in the rows that are contained in $I_i$. Then the coloring $C(i) = \mat{M}(i,:)$ is a proper coloring for $G$.
\end{lemma}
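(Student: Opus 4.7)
The plan is to verify directly from the definitions that whenever two vertices are adjacent in $G$, their assigned color labels (i.e., the corresponding rows of $\mat{M}$) must differ. Since the coloring $C$ assigns to each vertex $i$ the binary vector $\mat{M}(i,:)$, showing $C(u) \neq C(v)$ for every edge $(u,v) \in E$ is exactly what is required for $C$ to be a proper coloring. There is no need to control the total number of colors used; properness is a purely local condition on edges.

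First I would fix an arbitrary edge $(u,v) \in E$ and invoke the hypothesis that $\mathcal{I}$ is a $G$-separating system. By definition, this yields an index $i \in [m]$ such that exactly one of $u, v$ lies in $I_i$. Translating this statement to the separating system matrix $\mat{M}$ via the defining relation $\mat{M}(j,i) = 1 \iff j \in I_i$, we obtain $\mat{M}(u,i) \neq \mat{M}(v,i)$. Consequently the full rows differ: $\mat{M}(u,:) \neq \mat{M}(v,:)$, which is to say $C(u) \neq C(v)$.

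Since $(u,v)$ was an arbitrary edge, every pair of adjacent vertices receives distinct labels under $C$, so $C$ is a proper coloring of $G$, as claimed. There is no real obstacle here; the lemma is essentially a restatement of the separating-system condition in the language of vertex labels, and its content lies in the converse direction already handled by the construction preceding the lemma. The only thing to be careful about is keeping the correspondence between set membership (rows of $\mat{M}$) and column-indexed intervention sets straight, so that the separation of $u$ and $v$ by $I_i$ translates to a disagreement of $C(u)$ and $C(v)$ in the $i$-th coordinate rather than some transposed variant.
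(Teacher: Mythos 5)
Your proof is correct: for each edge the separating set $I_i$ forces $\mat{M}(u,i)\neq\mat{M}(v,i)$, hence distinct rows and distinct colors, which is all that properness requires. The paper itself does not write out a proof of this lemma (it is cited from \cite{Cheng1984}), and your direct edge-by-edge verification is exactly the intended argument, so there is nothing to add.
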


This connection between graph colorings and graph separating systems is important: Ultimately, we want to use graph colorings as a tool for searching over all sets of interventions, and find the one that minimizes a cost function. This is possible due to the characterization in Lemma \ref{lem:cai} and the fact that the set of interventions has to correspond to a graph separating system in order to identify any causal graph by Theorem \ref{thm:SepSysNecessary}. 

Along this direction, we have the following simple, yet important observation: We observe that a minimum graph separating system does not have to correspond to an optimum coloring. We illustrate this with a simple example:

\begin{figure*}[t]
\centering
\subfigure[An undirected graph]{\label{fig:graphsep}\includegraphics[width=60mm]{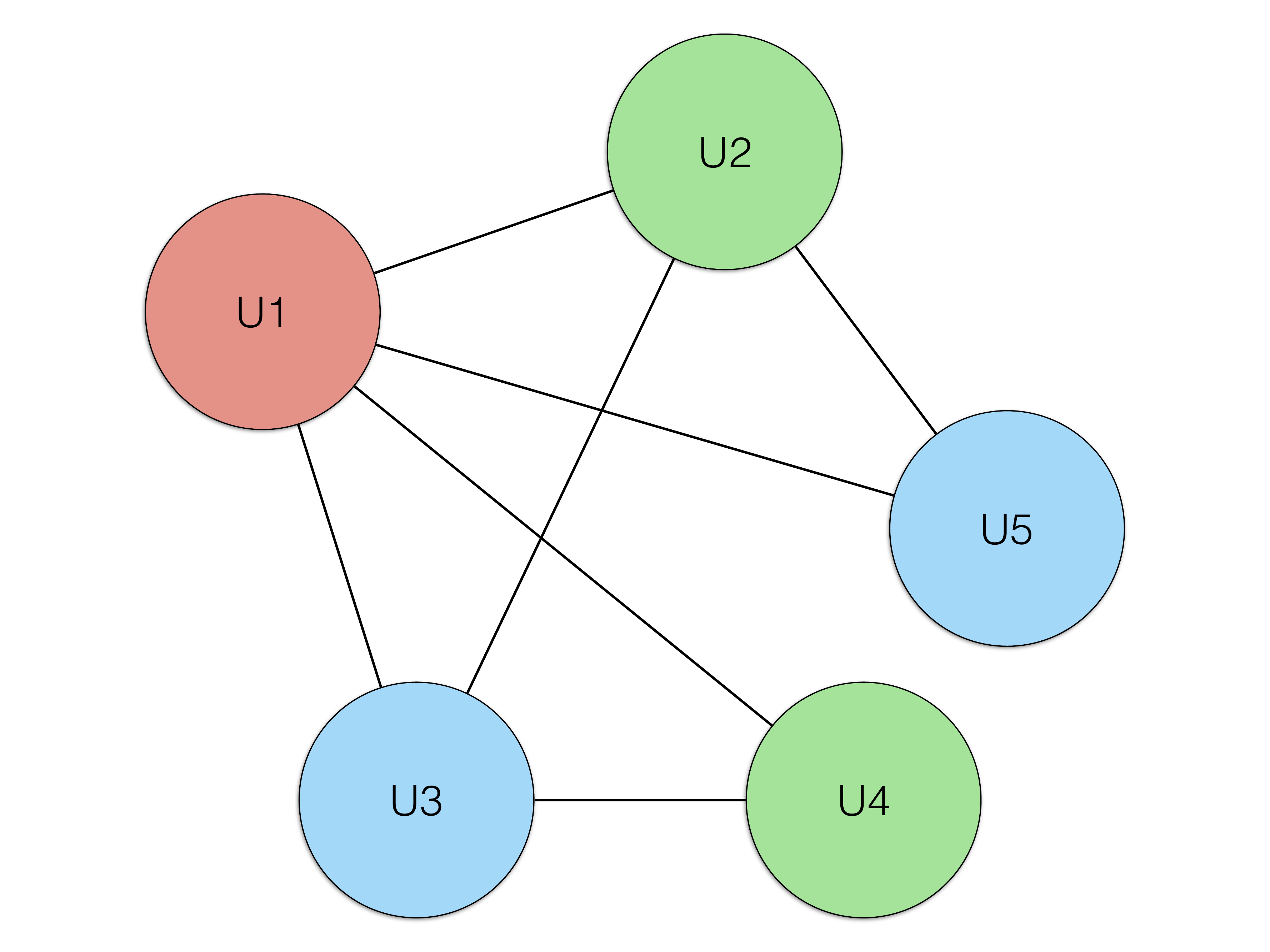}}
\hspace{1in}
\subfigure[Graph separating system vs. color separating system]{\label{fig:sepsys}\includegraphics[width=60mm]{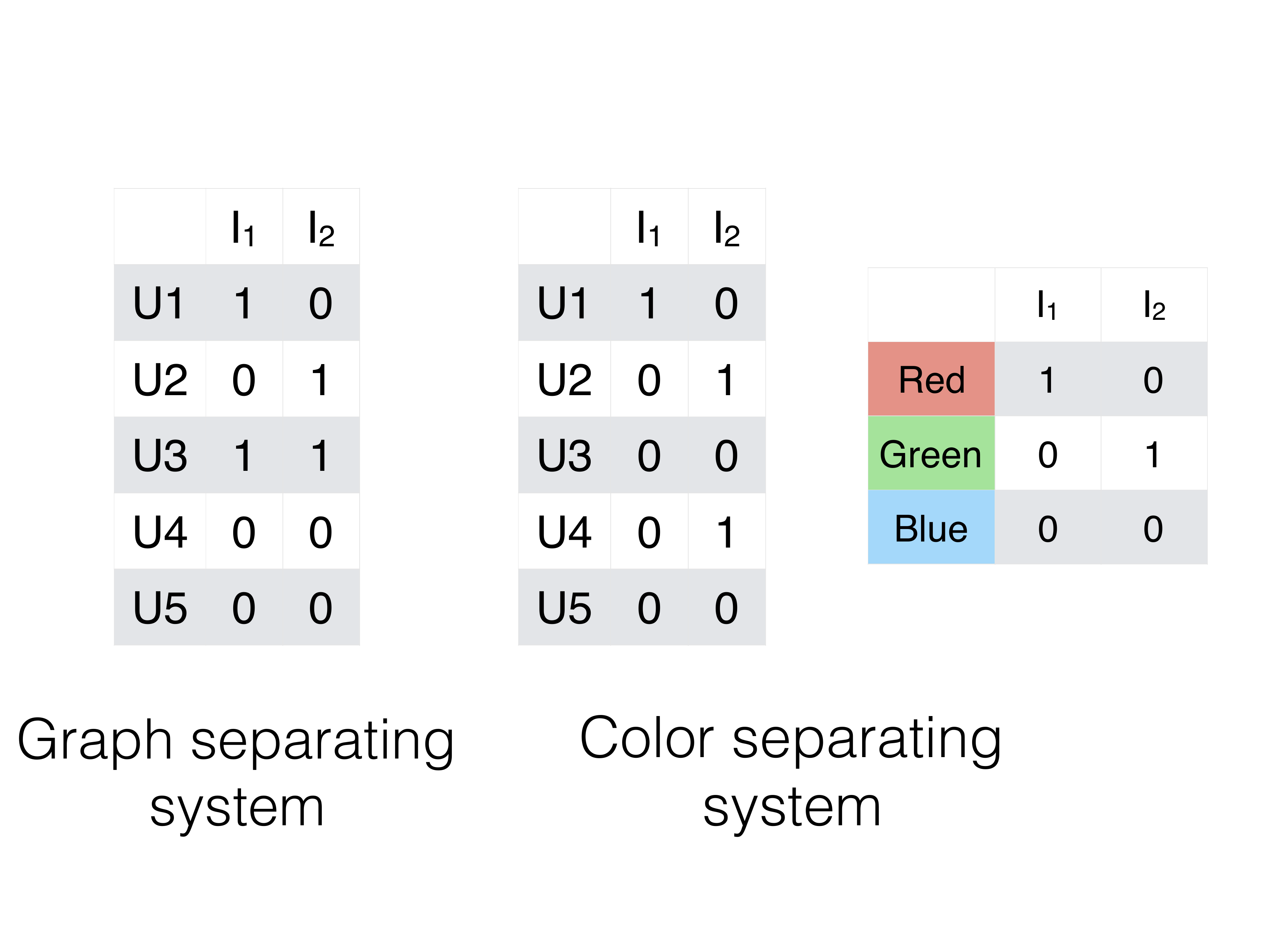}}
\caption{\small (a) An undirected graph with a proper 3 coloring. (b) A graph separating system, which does not separate color classes for any proper coloring of the graph. An example color-separating system is also provided.}
\end{figure*}

\begin{proposition}
Consider the undirected graph in Fig. \ref{fig:graphsep}. There does not exist any proper 3 coloring of this graph, for which the graph separating system given in Fig. \ref{fig:sepsys} separates every node across color classes.
\end{proposition}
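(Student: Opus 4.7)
The plan is to reduce the proposition to the statement that a small ``label-quotient'' graph built from the given separating system is not $3$-colorable, and then verify the latter by inspection of Figures~\ref{fig:graphsep} and~\ref{fig:sepsys}.

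First, suppose for contradiction that there is a proper $3$-coloring $C\colon V\to\{1,2,3\}$ of $G$ such that the given separating system $\mathcal{I}$, with matrix $\mat{M}$, separates every pair of vertices lying in distinct color classes. Taking contrapositives: if $\mat{M}(u,:)=\mat{M}(v,:)$, then $C(u)=C(v)$. Writing $L(v):=\mat{M}(v,:)$ for the binary label of $v$, this means $C$ factors through $L$, i.e.\ there exists a well-defined $\overline{C}\colon\{L(v):v\in V\}\to\{1,2,3\}$ with $C=\overline{C}\circ L$.

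Next, form the label-quotient graph $H$ whose vertex set is $\{L(v):v\in V\}$ and whose edge set is $\{(L(u),L(v)):(u,v)\in E(G)\}$. By Lemma~\ref{lem:cai}, adjacent vertices of $G$ always receive distinct labels, so $H$ is a simple loopless graph, and the properness of $C$ on $G$ translates directly into properness of $\overline{C}$ on $H$. Thus it suffices to show $\chi(H)\geq 4$. The cleanest way to do this is to exhibit a $K_4$ inside $H$: four distinct labels $\ell_1,\ell_2,\ell_3,\ell_4$ such that for every pair $(i,j)$ with $i<j$ there is an edge of $G$ whose endpoints carry labels $\ell_i$ and $\ell_j$. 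Such four labels are pairwise adjacent in $H$ and preclude any proper $3$-coloring $\overline{C}$, yielding the desired contradiction.

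The main obstacle is the final $K_4$-verification, which depends on the precise adjacencies drawn in Figure~\ref{fig:graphsep} and the specific subsets drawn in Figure~\ref{fig:sepsys}. This step is a finite inspection: once the row-labels of each of the (constantly many) vertices have been read off, one enumerates the $\binom{4}{2}=6$ candidate label-pairs and checks each against the edge list of $G$. Should the four labels actually used by $\mathcal{I}$ not happen to induce $K_4$ directly, the fallback is equally short: enumerate the finitely many proper $3$-colorings of the small graph $G$ up to color permutation, and for each exhibit a single pair $(u,v)$ with $C(u)\neq C(v)$ and $\mat{M}(u,:)=\mat{M}(v,:)$, so that $\mathcal{I}$ fails to separate this cross-class pair. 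Either route reduces the proposition to a bounded-size combinatorial check.
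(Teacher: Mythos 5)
Your reduction is sound and in fact tight: assuming a proper $3$-coloring $C$ whose color classes are all separated by the given system, the implication $\mat{M}(u,:)=\mat{M}(v,:)\Rightarrow C(u)=C(v)$ does make $C$ factor through the label map, Lemma~\ref{lem:cai} makes the label-quotient $H$ loopless, and the proposition is indeed equivalent to $H$ not being $3$-colorable (the converse direction also holds: any proper $3$-coloring of $H$ pulls back to a proper $3$-coloring of $G$ all of whose cross-class pairs are separated). Moreover, since the system in Fig.~\ref{fig:sepsys} has only two sets, $H$ has at most four vertices, so $H$ fails to be $3$-colorable if and only if $H$ is exactly a $K_4$ on four distinct labels; your ``fallback'' is therefore never needed, and the $K_4$ test is a complete criterion rather than merely the cleanest option.

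The gap is that the argument stops exactly where the content of the proposition begins: every substantive claim about Fig.~\ref{fig:graphsep} and Fig.~\ref{fig:sepsys} is deferred to an unperformed ``finite inspection,'' so nothing about the actual example is ever verified. A complete proof must exhibit the concrete obstruction, and the paper does so with a single pair: the rows of the given separating matrix corresponding to $U4$ and $U5$ are identical, yet every proper $3$-coloring of the graph in Fig.~\ref{fig:graphsep} necessarily assigns $U4$ and $U5$ different colors; hence no proper $3$-coloring can have all of its cross-class pairs separated by this system. In your language, $U4$ and $U5$ are merged in $H$ and the merged graph is not $3$-colorable. Note also that the paper's route replaces your proposed enumeration of all proper $3$-colorings by one witness pair that works uniformly for every $3$-coloring, which is both shorter and is precisely the structural point the proposition illustrates, namely that a minimum graph separating system need not refine any proper $\chi$-coloring. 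So: right framework and a correct (even sharpened) equivalence, but the decisive check --- the only step that actually depends on the figures --- is missing.
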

\begin{proof}
Notice that the chromatic number of the given graph is 3. Hence the minimum separating system size is $\lceil\log_2(3)\rceil=2$. Thus the given graph separating system is a minimum graph separating system. In any proper 3-coloring, $U4$ and $U5$ must have different colors. Hence, any color-separating system separates $U4$ and $U5$. However the rows of the graph separating system which correspond to $U4$ and $U5$ are the same. In other words, any 3-coloring based graph separating system separates $U4$ and $U5$ whereas the graph separating system given in Fig. \ref{fig:graphsep} does not.
\end{proof}

This problem can be solved by assigning both vertices $U4$ and $U5$ a new color, hence coloring the graph by $\chi +1$ colors. We can conclude the following: Suppose we consider the cost-optimum intervention design problem with at most $\lceil\log(\chi)\rceil$ interventions. When we formulate it as a search problem over the graph colorings, we need to consider the colorings with at most $2^{\lceil\log(\chi)\rceil}$ colors instead of $\chi$ colors. 

\section{Cost-Optimal Intervention Design}
In this section, we first define the cost-optimal intervention design problem. Later we show that this problem can be solved in polynomial time. 

Suppose each variable has an associated cost $w_i$ of being intervened on. We consider a modular cost function: The cost of intervening on a set $S$ of nodes is $w(S) =\sum_{i\in S}w_i $. Our objective is to find the set of interventions with minimum total cost, that can identify any causal graph with the given skeleton: Given the causal graph skeleton $G$, find the set of interventions $\mathcal{S} = \{S_1,S_2,\hdots,S_m\}$ that can identify any causal graph with the skeleton $G$, with minimum total cost $\sum_{i}\sum_{j\in S_i}w_j$. In this section, we do not assume that the number of experiments are bounded and we are only interested in minimizing the total cost. We have the following theorem:
\begin{theorem}
\label{thm:unbounded}
Let $G=(V,E)$ be a chordal graph, and $w:V\rightarrow \mathbb{R}$ be a cost function on its vertices. Let an intervention on set $I$ have cost $\sum_{i\in I} w_i$. Then the optimal set of interventions with minimum total cost, that can learn any causal graph $D$ with skeleton $G$ is given by $\mathcal{I} = \{I_i\}_{i\in [\chi]}$, where $\{I_i\}_{i\in [\chi]}$ is any $\chi$ coloring of the graph $G_{V\backslash S}=\overline{(V\backslash S,E)}$, where $S$ is the maximum weighted independent set of $G$.
\end{theorem}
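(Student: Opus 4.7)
The plan is to reduce the cost-optimal intervention design problem on a chordal skeleton to two classical combinatorial problems that are both polynomial-time solvable on chordal graphs: maximum weighted independent set and optimal proper coloring. I will treat the cost function as nonnegative, which is needed to make the unconstrained minimization well-posed.

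By Theorem \ref{thm:SepSysNecessary}, any intervention design $\mathcal{I} = \{I_1, \ldots, I_m\}$ that learns every causal graph with skeleton $G$ must be a $G$-separating system, which by Lemma \ref{lem:cai} is equivalent to the map $C : V \to \{0,1\}^m$, sending each vertex $v$ to the binary vector whose $i$-th coordinate indicates whether $v \in I_i$, being a proper coloring of $G$. Writing the cost as a per-vertex sum,
\[
\sum_{i=1}^m w(I_i) \;=\; \sum_{v \in V} w(v)\, h(C(v)),
\]
where $h(\cdot)$ denotes Hamming weight, we see that every vertex with a nonzero label contributes at least $w(v)$ to the cost. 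The key structural observation is that the ``zero class'' $S := \{v : C(v) = 0\}$ must be an independent set of $G$, since any adjacent pair sharing the all-zero label would violate properness. Hence
\[
\sum_{i=1}^m w(I_i) \;\geq\; \sum_{v \notin S} w(v) \;=\; w(V) - w(S) \;\geq\; w(V) - w(S^*),
\]
where $S^*$ is a maximum weighted independent set of $G$.

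To match this lower bound, take $S^*$ as in the statement and pick any optimal proper coloring of the induced subgraph $G_{V \setminus S^*}$ using $\chi := \chi(G_{V \setminus S^*})$ colors; declare each color class an intervention set. Each vertex in $V \setminus S^*$ appears in exactly one intervention, giving total cost $w(V \setminus S^*) = w(V) - w(S^*)$. One must verify that the resulting family, viewed as a collection of subsets of $V$, is a $G$-separating system: edges within $S^*$ do not exist by independence, edges across the cut $(S^*, V \setminus S^*)$ are separated because one endpoint carries the all-zero label while the other does not, and edges inside $V \setminus S^*$ are separated by properness of the coloring. By Theorem \ref{thm:SepSysNecessary}, this design learns every causal graph with skeleton $G$. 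Polynomial-time tractability follows because chordal graphs are perfect: maximum weighted independent set is computable in polynomial time via a perfect elimination ordering, and $G_{V \setminus S^*}$ inherits chordality and is thus optimally colorable in polynomial time as well.

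The main obstacle is the lower-bound step: one must recognize that the per-vertex contribution $w(v) h(C(v))$ decouples the design problem into (a) choosing which vertices to exempt from all interventions (forced to be an independent set by the separating-system structure), and (b) covering the remaining vertices as cheaply as possible, which in turn forces each such vertex to appear in exactly one intervention and reduces (b) to a partition into proper-coloring classes on $G_{V \setminus S^*}$. Once this decoupling is in hand, the remaining work is purely matching the lower bound with the coloring-based construction and invoking perfect-graph algorithmics to certify polynomial running time.
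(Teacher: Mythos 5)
Your proof is correct and takes essentially the same route as the paper's: reduce to graph separating systems via Theorem \ref{thm:SepSysNecessary} and Lemma \ref{lem:cai}, observe that the all-zero label class must be an independent set so the per-vertex cost decomposition is minimized by giving the zero label to the maximum weighted independent set, and then cover $V\setminus S^*$ with a proper coloring whose classes become singly-used interventions. You are merely more explicit than the paper in separating the lower bound from the matching construction, and in flagging the (implicitly assumed) nonnegativity of the costs.
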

\begin{proof}
See the supplementary material.
\end{proof}

In other words, the optimum strategy is to color the vertex induced subgraph obtained by removing the maximum weighted independent set $S$ and intervening on each color class individually. After coloring the maximum weighted independent set, the remaining graph can always be colored by at most $\chi$ colors, i.e., the chromatic number of $G$. The remaining graph is still chordal. Since optimum coloring and maximum weighted independent set can be found in polynomial time for chordal graphs, $\mathcal{I}$ can be constructed in polynomial time.

\section{Intervention Design with Bounded Number of Interventions}
\label{sec:nonAdaptive}
In this section, we consider the cost-optimum intervention design problem for a given number of experiments. We construct a linear integer program formulation of this problem and identify the conditions under which it can be efficiently solved. As a corollary we show that when the causal graph skeleton is a tree or a clique tree, the cost-optimal intervention design problem can be solved in polynomial time. Later, we present two greedy algorithms for more general graph classes.

To be able to uniquely identify any causal graph, we need a graph separating system by Theorem \ref{thm:SepSysNecessary}. Hence, we need $m\geq \lceil\log(\chi)\rceil$ since the minimum graph separating system has size $\lceil\log(\chi)\rceil$ due to \cite{Cheng1984}. 

\subsection{Coloring formulation of Cost-Optimum Intervention Design}
One common approach to tackle combinatorial optimization problems is to write them as linear integer programs: Often binary variables are used with a linear objective function and a set of linear constraints. The constraints determine the set of feasible points. One can construct a convex object (a convex polytope) based on the set of feasible points by simply taking their convex hull. However this object can not always be described efficiently. If it can, then the linear program over this convex object can be efficiently solved and the result is the optimal solution of the original combinatorial optimization problem. We develop an integer linear program formulation for finding the cost-optimum intervention design using its connection to proper graph colorings. 

From Theorem \ref{thm:SepSysNecessary}, we know that we need the set of interventions to correspond to a graph separating system for the skeleton. From Lemma \ref{lem:cai}, we know that any graph separating system can be constructed from some proper coloring. Based on these, we have the following key observation: To solve the cost-optimal intervention design problem given a skeleton graph, \emph{it is sufficient to search over all proper colorings}, and find the coloring that gives the graph separating system with the minimum cost. We use the following (standard) coloring formulation: Suppose we are given an undirected graph $G$ with $n$ vertices and $t$ colors are available. Assign a binary variable $x_{i,k}\in\{0,1\}$ to every vertex-color pair $(i,k)$: $x_{i,k}=1$ if vertex $i$ is colored with color $k$, and 0 otherwise. Each vertex is assigned a single color, which can be captured by the equality $\sum_{k\in[t]}x_{i,k}=1$. Since coloring is proper, every pair of adjacent vertices are assigned different colors, which can be captured by $x_{i,k}+x_{j,k}\leq 1,\forall (i,j)\in E, \forall k\in [t]$. Based on our linear integer program formulation given in the supplementary material, we have the following theorem:
\begin{theorem}
\label{thm:search}
Consider the cost-optimal non-adaptive intervention design problem given the skeleton $G=(V,E)$ of the causal graph: Let each node be associated with an intervention cost, and the cost of intervening on a set of variables be the sum of the costs of each variable. Then, the non-adaptive intervention design that can learn any causal graph with the given skeleton in at most $m$ interventions with the minimum total cost can be identified in polynomial time, if the following polytope can be described using polynomially many linear inequalities:
\begin{align}
\label{eq:polytope}
\mathcal{C} = \conv \{x\in{\mathbb{R}^{n\times 2^m}}:&\sum\nolimits_{k\in [2^m]}x_{i,k}\leq 1, \forall i\in[n],\\
& x_{i,k}+x_{j,k}\leq 1, \forall (i,j)\in E,\nonumber\\
 & x_{i,k}\in\{0,1\},\forall i\in[n], k\in[2^m] \}\nonumber.
\end{align}
\end{theorem}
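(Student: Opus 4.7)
The plan is to recast the cost-optimal intervention design problem as a linear program over the polytope $\mathcal{C}$ and then exploit the fact that the vertices of $\mathcal{C}$ are exactly the characteristic vectors of proper colorings of $G$ with labels drawn from $\{0,1\}^m$.

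First, by Theorem \ref{thm:SepSysNecessary}, any set of at most $m$ interventions that learns every causal graph with skeleton $G$ must form a graph separating system for $G$; by Lemma \ref{lem:cai} together with the converse direction from Section \ref{sec:somecoloring}, such a separating system is equivalent to a proper coloring $C\colon V\to\{0,1\}^m$, where the $k$th bit of the label of vertex $i$ is the indicator that $i$ belongs to the $k$th intervention set $I_k$. Under this correspondence the total cost decomposes as
\begin{equation*}
\sum_{k=1}^{m} w(I_k) \;=\; \sum_{k=1}^{m}\sum_{i \,:\, C(i)_k=1} w_i \;=\; \sum_{i\in V} w_i\,\|C(i)\|_1,
\end{equation*}
so vertex $i$ contributes its weight times the Hamming weight of its color label.

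Next I would index the $2^m$ possible binary labels by $k\in[2^m]$, write $a_k$ for the Hamming weight of the binary representation of label $k$, and introduce indicators $x_{i,k}\in\{0,1\}$ encoding ``vertex $i$ receives label $k$''. The problem becomes
\begin{equation*}
\min \;\sum_{i\in[n]}\sum_{k\in[2^m]} w_i\, a_k\, x_{i,k}
\end{equation*}
subject to the constraints appearing in the definition of $\mathcal{C}$ (each vertex receives at most one label, and no two adjacent vertices share a label). The feasible set of this integer program is exactly the set of integer points of $\mathcal{C}$. Since the objective is linear and $\mathcal{C}$ is by construction the convex hull of these integer points, every vertex of $\mathcal{C}$ is feasible and integral, so the continuous LP $\min\,\langle c,x\rangle$ over $\mathcal{C}$ attains its optimum at a valid proper coloring.

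Finally, if $\mathcal{C}$ admits a description by polynomially many linear inequalities, this LP can be solved in polynomial time by any standard interior-point or ellipsoid method; reading off the labels at the optimal vertex recovers intervention sets $I_1,\ldots,I_m$ of minimum total cost. The conceptual content of the proof is light: the crucial observation is that the interventional cost decomposes \emph{linearly} across vertices via the Hamming-weight coefficients $a_k$, which is what makes the reduction to a single LP over $\mathcal{C}$ possible. The genuine difficulty has been pushed into the hypothesis, namely the construction of a polynomial-size outer description of $\mathcal{C}$ for a given graph class; this is exactly what the subsequent corollaries (for trees and clique trees) will supply.
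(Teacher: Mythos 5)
Your proposal follows essentially the same route as the paper: Theorem \ref{thm:SepSysNecessary} plus Lemma \ref{lem:cai} reduce the problem to a search over proper colorings with labels in $\{0,1\}^m$, the total cost decomposes linearly as $\sum_i w_i\,\|C(i)\|_1$ (the paper's vector $\mat{b}$ of Hamming weights is exactly your coefficients $a_k$), and a linear objective over the integral polytope $\mathcal{C}$ with a polynomial-size description is solved by a standard LP method. The only nuance is that, as in the paper's own integer program, you should impose $\sum_k x_{i,k}=1$ (every vertex gets exactly one of the $2^m$ labels, the all-zero label included) rather than ``at most one,'' since otherwise the cost-minimizing point leaves adjacent vertices unlabeled and hence unseparated; with that equality the argument is correct and matches the paper's.
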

\begin{proof}
See the supplementary material.
\end{proof}
Donne in \cite{Donne2016} identifies that when the graph is a tree, one can replace the constraints $x_{i,k}\in \{0,1\}$ with $x_{i,k}\geq 0$ for all $(i,k)\in [n]\times [2^m]$ without changing the polytope in \ref{eq:polytope}. He also shows that when the graph is a clique-tree (a graph that can be obtained from a  tree by replacing the vertices of the tree with cliques), a simple alternative characterization based on the constraints on the maximum cliques of the graph exists, which can be efficiently described. Based on this and Theorem \ref{thm:search}, we have the following corollary:
\begin{corollary}
The cost-optimal non-adaptive intervention design problem can be solved in polynomial time if the given skeleton of the causal graph is a tree or a clique tree.
\end{corollary}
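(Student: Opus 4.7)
The plan is to treat the corollary as a direct consequence of Theorem \ref{thm:search}: that theorem already reduces the cost-optimal non-adaptive intervention design problem to linear optimization of the given cost vector over the polytope $\mathcal{C}$ in \eqref{eq:polytope}, so it suffices to exhibit, for each of the two skeleton classes, a description of $\mathcal{C}$ using polynomially many linear inequalities. Both descriptions are quoted from Donne \cite{Donne2016}, and no fresh combinatorial work is needed beyond translating his notation to ours.

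For the tree case, I would invoke Donne's result that the natural LP relaxation of the proper-coloring polytope is tight on trees: replacing each integrality constraint $x_{i,k}\in\{0,1\}$ in \eqref{eq:polytope} by the non-negativity constraint $x_{i,k}\geq 0$ leaves the convex hull unchanged. This immediately yields an explicit description of $\mathcal{C}$ using $n\cdot 2^m$ variables, $n$ normalization constraints $\sum_{k}x_{i,k}\leq 1$, $|E|\cdot 2^m$ edge-color constraints $x_{i,k}+x_{j,k}\leq 1$, and $n\cdot 2^m$ non-negativity constraints — all polynomial in the input size (recall that the regime of interest is $m=O(\log\chi)=O(\log n)$, so $2^m$ is polynomial in $n$). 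A standard LP solver then returns the cost-optimal intervention design in polynomial time.

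For the clique-tree case, I would appeal to Donne's alternative characterization: when $G$ is built from a tree by replacing each tree-vertex with a clique, $\mathcal{C}$ admits a compact description whose proper-coloring constraints are indexed by the maximal cliques of $G$ rather than by individual edges. Concretely, each maximal clique $K$ contributes, for every color $k\in[2^m]$, a constraint $\sum_{i\in K}x_{i,k}\leq 1$, together with the relaxed non-negativity constraints and the per-vertex normalization. Since clique-trees are chordal, $G$ has at most $n$ maximal cliques, and these can be enumerated in polynomial time by the standard clique-tree decomposition of chordal graphs (e.g., via a perfect elimination ordering). Hence this extended formulation has at most $n\cdot 2^m + n + n\cdot 2^m$ inequalities, again polynomial in the input size.

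The main obstacle, such as it is, lies not in the argument for the corollary but in verifying that Donne's formulations literally produce the polytope $\mathcal{C}$ as stated in Theorem \ref{thm:search}: one must check that his variable sets, his normalization constraints, and his definition of a ``clique-tree'' align with ours, so that the direct substitution into Theorem \ref{thm:search} is legitimate. Once this is confirmed, both cases reduce to solving a single polynomially-sized linear program with the modular cost vector $(w_i)_{i\in V}$ summed appropriately across colors, and polynomial-time solvability follows immediately from the ellipsoid or interior-point method.
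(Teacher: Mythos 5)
Your proposal is correct and follows essentially the same route as the paper: the corollary is obtained by plugging Donne's two results (integrality of the LP relaxation of the coloring polytope on trees, and the compact maximal-clique description for clique trees) into Theorem \ref{thm:search}, exactly as the paper does in the paragraph preceding the corollary. The only minor caveat is your claim that $2^m$ is polynomial because $m=O(\log n)$; since $m$ is an arbitrary input, the correct fix (which the paper uses in its formulation) is to cap the number of usable color labels at $\min(2^m,n)$, which keeps the program polynomially sized regardless of $m$.
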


We can identify two other special solutions for the cost-optimum intervention design problem through a combinatorial argument: $(i)$ The maximum number of interventions is greater than or equal to the chromatic number $\chi$, $(ii)$ the graph is uniquely colorable. See the supplementary material for the corresponding results and details.

\subsection{Greedy algorithms}
In this section, we present two greedy algorithms for the minimum cost intervention design problem for more general graph classes. 
\begin{algorithm}[ht!]
\begin{small}
    \caption{Greedy Intervention Design for Total Cost Minimization for Chordal Skeleton}
   \label{alg:greedyChordal}
\begin{algorithmic}[1]
    \STATE {\bfseries Input:} A chordal graph $G$, maximum number of interventions $m$, cost $w_i$ assigned to each vertex $i$.
	\STATE $r=2^m, t=0, G_1=\overline{(V_1,E)}, V_1 = V$.
	\STATE $T=$ \rm{All binary vectors of length $m$}.
     \WHILE  {$r > \chi$} 
     \STATE Find maximum weighted independent set $S_t$ of $G_t$.
     \STATE Find $u  = \argmin_{x\in T} \lvert x\rvert_1$ (Break ties arbitrarily).
     \STATE Assign $M(i,:)=$ to every $i\in S_t$.
     \STATE $G_{t+1} = \overline{(V_{t+1},E)}, V_{t+1} = V_t\backslash S_t $: $G_{t+1}$ is the induced subgraph on the uncolored nodes.
 	\STATE $r\leftarrow r-1$, $t\leftarrow t+1 $, $T \leftarrow T-\{u\}$.
    \ENDWHILE
    \STATE Color $G_{t-1}$ with minimum number of colors.
    \STATE Assign the remaining length-$m$ binary vectors as rows of $M$ to different color classes.
    \STATE Output: $M$.
\end{algorithmic}
\end{small}
\end{algorithm}
\begin{algorithm}[ht!]
\begin{small}
    \caption{Greedy Intervention Design for Total Cost Minimization for Interval Skeleton}
   \label{alg:greedyInterval}
\begin{algorithmic}[1]
    \STATE {\bfseries Input:} An interval graph $G$, maximum number of interventions $m$, cost $w_i$ assigned to each vertex $i$.
	\STATE $r=2^m, t=0, G_1=\overline{(V_1,E)}, V_1 = V$.
     \WHILE  {$r-\binom{m}{t}\geq \chi$} 
     \STATE Find maximum (weighted) $\binom{m}{t}-$colorable induced subgraph $S_t$
     \STATE Assign all weight$-t$ binary vectors of length $m$ as rows of $M(S_t,:)$ to different color classes.
     \STATE $G_{t+1} = \overline{(V_{t+1},E)}, V_{t+1} = V_t\backslash S_t $: $G_{t+1}$ is the induced subgraph on the uncolored nodes.
     \STATE $r\leftarrow r-\binom{m}{t}$: $r$ is the number of unused available colors.
     \STATE $t\leftarrow t+1 $
    \ENDWHILE
    \STATE Color $G_{t-1}$ with minimum number of colors.
    \STATE Assign the remaining length-$m$ binary vectors as rows of $M$ to different color classes.
    \STATE Output: $M$.
\end{algorithmic}
\end{small}
\end{algorithm}

We have the following observation: Consider a coloring $C:V\rightarrow [t]$, which uses up to $t$ colors. Consider the graph separating system matrix $\mat{M}$ constructed using this coloring, as described in Section \ref{sec:somecoloring}. Recall that the $i^{th}$ row of $\mat{M}$ is a $\{0,1\}$ vector which represents the label for the color of vertex $i$, and $j^{th}$ column is the indicator vector for the set of variables included in intervention $j$. We call the $\{0,1\}$ vector used for color $k$ as the coloring label for color $k$. The separating property does not depend on the color labels: Using different labels for different colors is sufficient for the graph separating property to hold. However, the number of 1s of a coloring label determines how many times the corresponding variable is intervened on using the corresponding intervention design. Hence, we can choose the coloring labels from the binary vectors with small weight, given the choice. Moreover, the column index of a 1 in a certain row does not affect the cost since in a non-adaptive design, every intervention counts towards the total cost (we cannot stop the experiments earlier since we do not get feedback from the causal graph, unlike adaptive algorithms).

Based on this observation, we can try to greedily color the graph as follows: Suppose we are allowed to use up to $m$ interventions. Thus the corresponding graph separating system matrix $\mat{M}$ can have up to $m$ columns, which allows up to $2^m$ distinct coloring labels. We can greedily color the graph by choosing labels with small weight first: Choose the color label with smallest weight from the available labels. Find the maximum weighted independent set of the graph. Assign the coloring label to the rows associated with this the vertices in this independent set. Remove the used coloring label from the available labels, update the graph by removing the colored vertices and iterate. 

\begin{figure*}[ht!]
\centering
\subfigure[n=20]{\label{fig:Expn20}\includegraphics[width=0.3\textwidth]{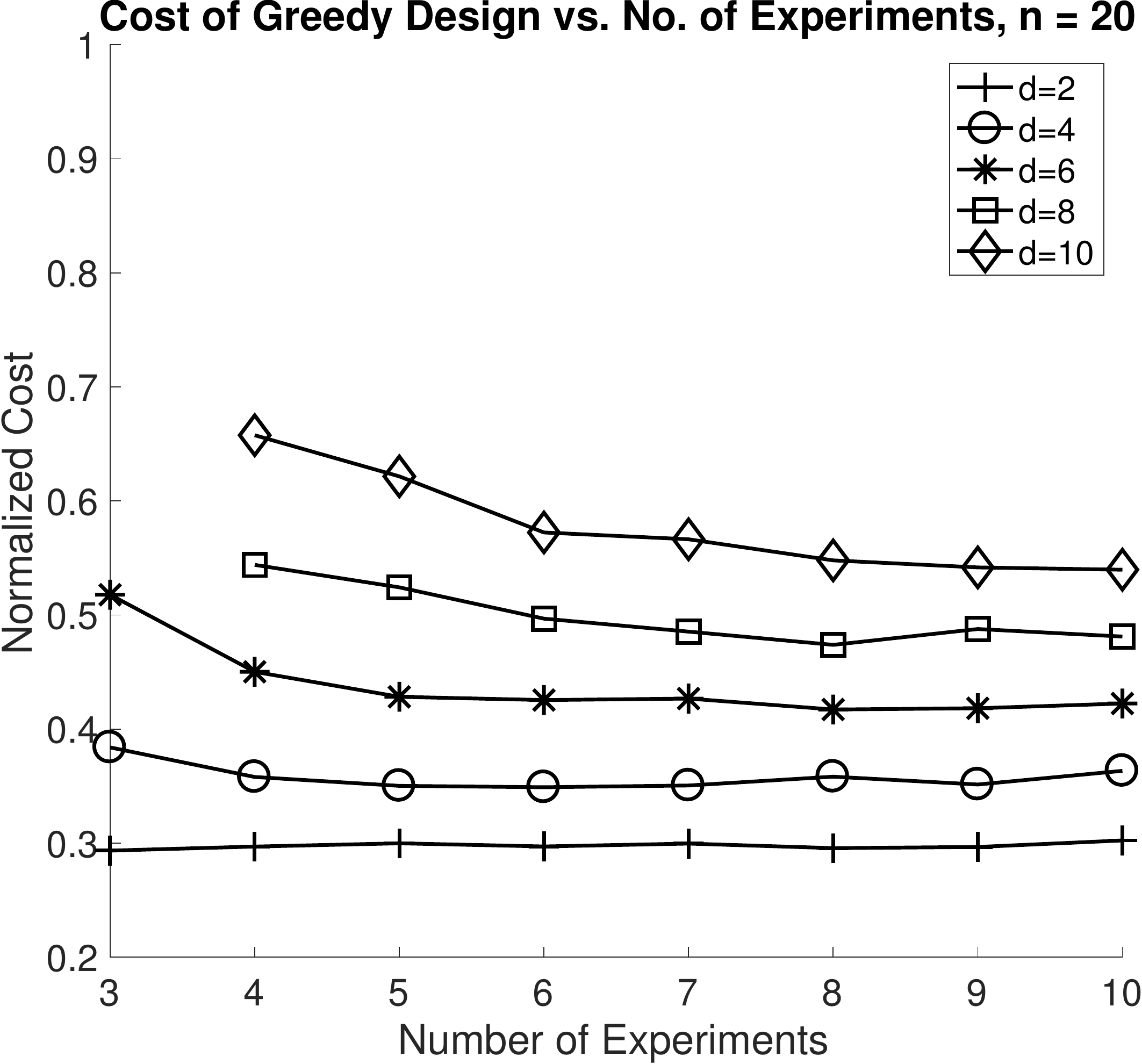}}
\subfigure[n=50]{\label{fig:Expn50}\includegraphics[width=0.3\textwidth]{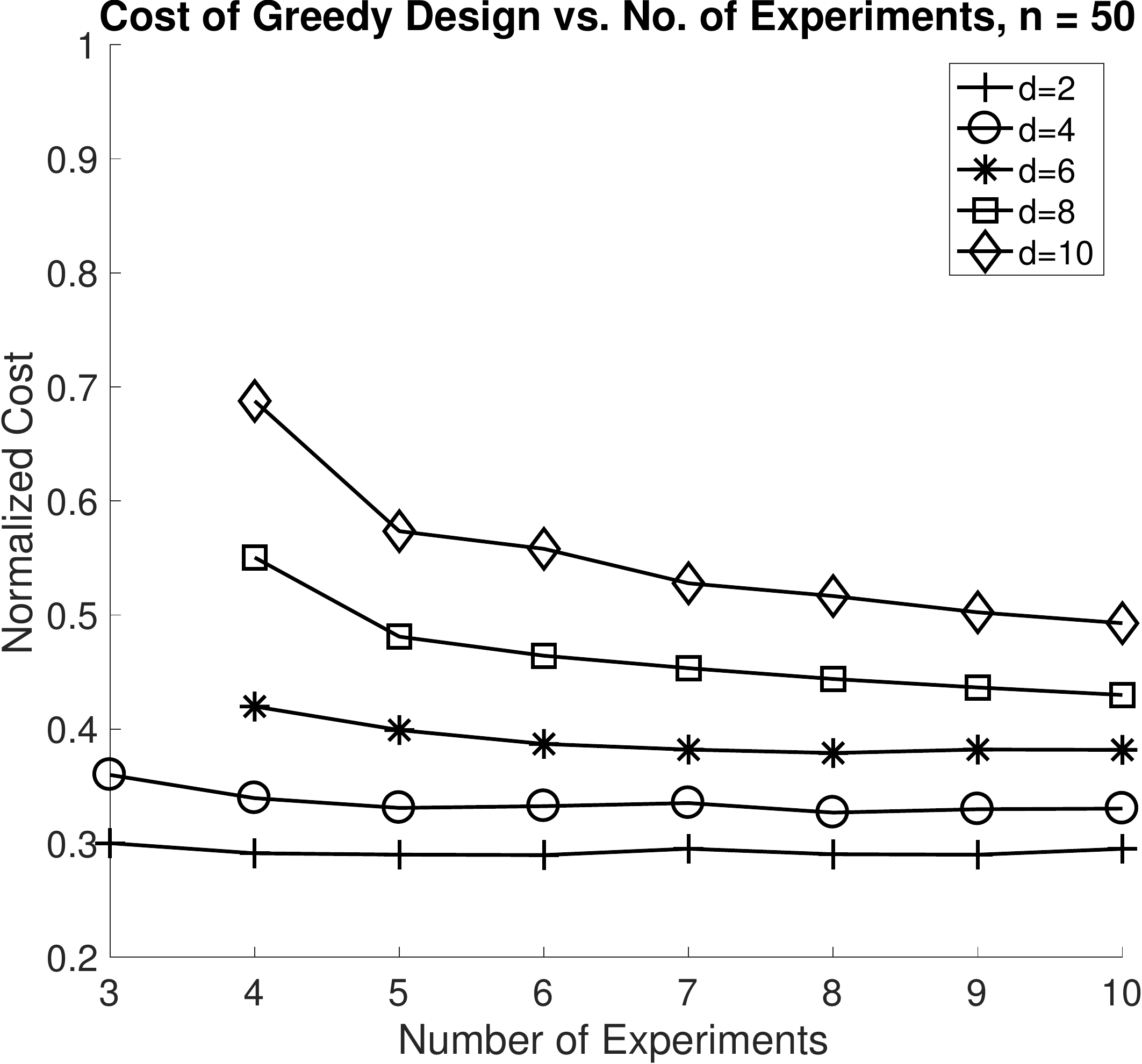}}
\subfigure[n=100]{\label{fig:Expn100}\includegraphics[width=0.3\textwidth]{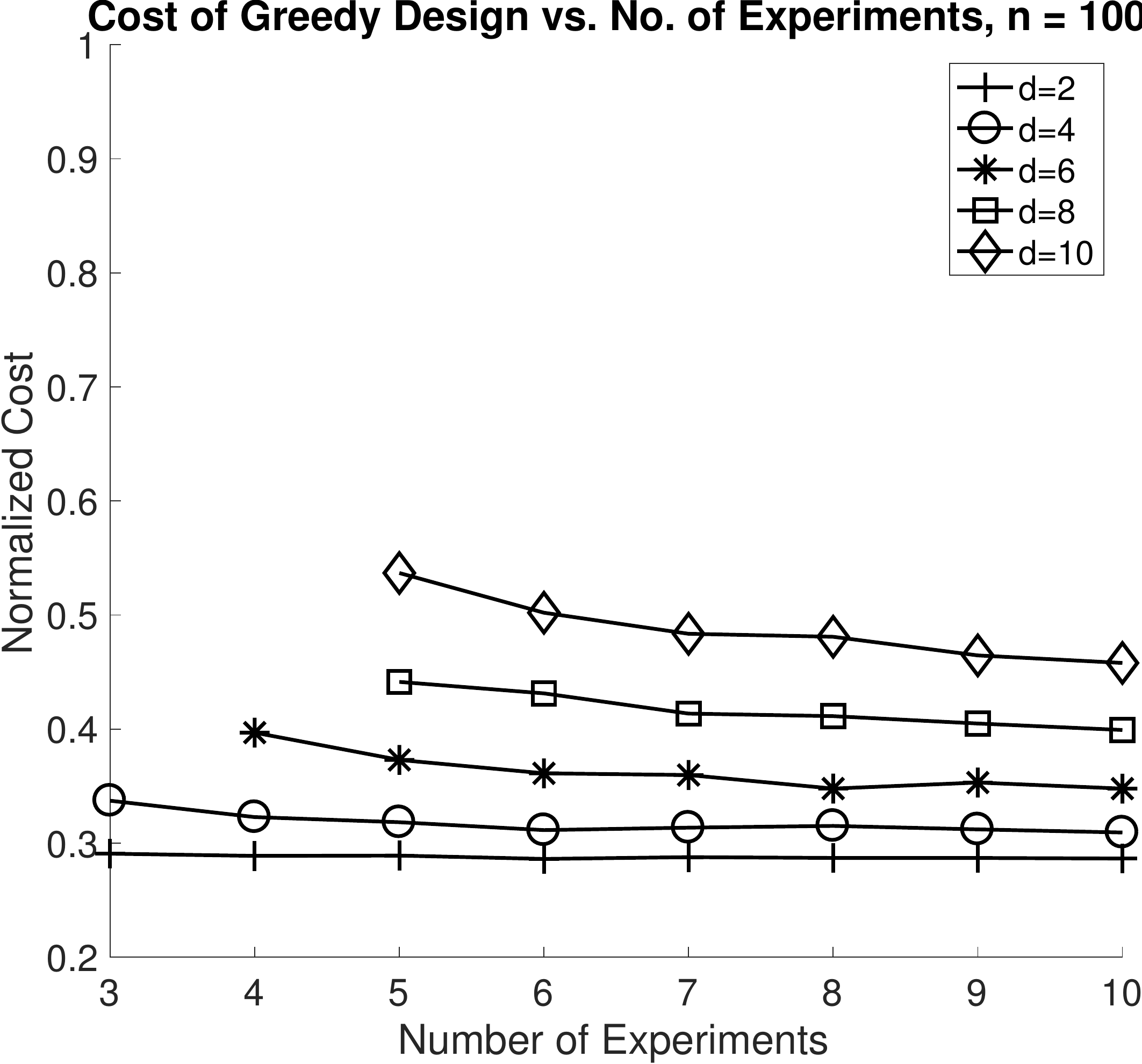}}
\caption{Exponential weights $w_i\sim exp(1)$. $n$: no. of vertices, $d$: Sparsity parameter of the chordal graph. Each datapoint is the average cost incurred by the greedy intervention design over 1000 randomly sampled causal graphs for a given number of experiments. The expected average cost of all the edges is $\E[w_i]=1$. The cost incurred by the intervention design is normalized by $n$. As observed, the cost incurred increases gradually as the number of experiments are reduced, or graph becomes denser. For sparse graphs, proposed construction incurs low cost even for up to 3 experiments.
}
\label{fig:simulation2}
\end{figure*}

However, this type of greedy coloring could end up using many more colors than allowed. Indeed one can show that greedily coloring a chordal graph using maximum independent sets at each step cannot approximate the chromatic number within an additive gap for all graphs. Thus, this vanilla greedy algorithm may use up all $2^m$ available colors and still have uncolored vertices, even though $\chi<2^m$. To avoid this, we use the following modified greedy algorithm: For the first $2^m-\chi$ steps, greedily color the graph using maximum weighted independent sets. Use the last $\chi$ colors to color the remaining uncolored vertices. Since the graph obtained by removing colored vertices have at most the same chromatic number as the original graph, $\chi$ colors are sufficient. The remaining graph is also chordal since removing vertices do not change the chordal property, hence finding a coloring that uses $\chi$ colors can be done efficiently. This algorithm is given in Algorithm \ref{alg:greedyChordal}.

We can improve our greedy algorithm when the graph is an interval graph, which is a strict subclass of the chordal graphs. Note that there are $\binom{m}{t}$ binary labels of length $m$ with weight $t$. When we use these $\binom{m}{t}$ vectors as the coloring labels, the corresponding intervention design requires every variable with these colors to be intervened on exactly $t$ times in total. Then, rather than finding the maximum independent set at iteration $t$, we can find the maximum weighted $\binom{m}{t}$-colorable subgraph, and use all the coloring labels of weight $t$. The cost of the colored vertices in the intervention design is $t$ times their total cost. We expect this to create a better coloring in terms of the total cost, since it colors a larger portion of the graph at each step. Finding the maximum weighted $k$ colorable subgraph is hard for non-constant $k$ in chordal graphs, however it can be solved in polynomial time if the graph is an interval graph \cite{Yannakakis1987}. This modified algorithm is given in Algorithm \ref{alg:greedyInterval}. Notice that when $m>>\log{n}$, the number of possible coloring labels is super-polynomial in $n$, which seem to make the algorithms run in super-polynomial time. However, when $m>>\log{n}$, we can only use the first $n$ color labels with the lowest weight, since a proper coloring on a graph with $n$ vertices can use at most $n$ colors in total.

\section{Experiments}
In this section, we test our greedy algorithm to construct intervention designs over randomly sampled chordal graphs. We follow the sampling scheme proposed by \cite{Shanmugam2015} (See the supplementary material for details). The costs of the vertices of the graph are selected from i.i.d. samples of an exponential random variable with mean 1. The total cost of all variables is then the same as the number of variables $n$. We normalize the cost incurred by our algorithm with $n$ and compare this normalized cost for different regimes. The parameter $d$ is a parameter that determines the sparsity of the graph: Graphs with larger $d$ are expected to have more edges.  See the supplementary material for the details of how the parameter $d$ affects the probability of an edge. We limit the simulation to at most 10 experiments ($x$-axis) and observe the effect of changing the number of variables $n$ and sparsity parameter $d$.  

Our intervention design algorithm, Algorithm \ref{alg:greedyChordal} requires a subroutine that can find the maximum weighted independent set of a given chordal graph. We implement the linear-time algorithm by Frank \cite{Frank1975} for finding the maximum weighted independent set of a chordal graph. For the details of Frank's implementation, see the supplementary material. 

We observe that the main factor that determines the average incurred cost is sparsity of the graph: The number of edges compared to the number of nodes. For a fixed $n$, reducing $d$ results in a smaller average cost by increasing the sparsity of the graph. For a fixed $d$, increasing $n$ reduces the sparsity, which is also shown to reduce the average cost incurred by the greedy intervention design. See the supplementary material for additional simulations where the costs are chosen as the i.i.d. sampes of a uniform random variable over the interval $[0,1]$. 
{\small
\bibliographystyle{plain}
\bibliography{causalinferenceSimple}

\begin{thebibliography}{10}

\bibitem{Chalupka2017}
Krzysztof Chalupka, Tobias Bischoff, Pietro Perona, and Frederick Eberhardt.
\newblock Unsupervised discovery of el nino using causal feature learning on
  microlevel climate data.
\newblock In {\em Proc. of UAI'16}, 2016.

\bibitem{Donne2016}
Diego~Delle Donne and Javier Marenco.
\newblock Polyhedral studies of vertex coloring problems: The standard
  formulation.
\newblock {\em Discrete Optimization}, 21:1--13, 2016.

\bibitem{Eberhardt2005}
Frederich Eberhardt, Clark Glymour, and Richard Scheines.
\newblock On the number of experiments sufficient and in the worst case
  necessary to identify all causal relations among n variables.
\newblock In {\em Proceedings of the 21st Conference on Uncertainty in
  Artificial Intelligence (UAI)}, pages 178--184, 2005.

\bibitem{EberhardtThesis}
Frederick Eberhardt.
\newblock Phd thesis.
\newblock {\em Causation and Intervention (Ph.D. Thesis)}, 2007.

\bibitem{Etesami2016}
Jalal Etesami and Negar Kiyavash.
\newblock Discovering influence structure.
\newblock In {\em IEEE ISIT}, 2016.

\bibitem{Frank1975}
Andr\'{a}s Frank.
\newblock Some polynomial algorithms for certain graphs and hypergraphs.
\newblock In {\em Proc. of the Fifth British Combinatorial Conference,
  Congressus Numerantium XV}, 1975.

\bibitem{Gao2016}
Weihao Gao, Sreeram Kannan, Sewoong Oh, and Pramod Viswanath.
\newblock Causal strength via shannon capacity: Axioms, estimators and
  applications.
\newblock In {\em Proceedings of the 33 rd International Conference on Machine
  Learning}, 2016.

\bibitem{Granger1969}
Clive~W. Granger.
\newblock Investigating causal relations by econometric models and
  cross-spectral methods.
\newblock {\em Econometrica: Journal of the Econometric Society}, pages
  424--438, 1969.

\bibitem{Wentrup2016}
Moritz Grosse-Wentrup, Dominik Janzing, Markus Siegel, and Bernhard
  Sch\"{o}lkopf.
\newblock Identification of causal relations in neuroimaging data with latent
  confounders: An instrumental variable approach.
\newblock {\em NeuroImage (Elsevier)}, 125:825--833, 2016.

\bibitem{Hauser2012a}
Alain Hauser and Peter B\"{u}hlmann.
\newblock Characterization and greedy learning of interventional markov
  equivalence classes of directed acyclic graphs.
\newblock {\em Journal of Machine Learning Research}, 13(1):2409--2464, 2012.

\bibitem{Hauser2012b}
Alain Hauser and Peter B\"{u}hlmann.
\newblock Two optimal strategies for active learning of causal networks from
  interventional data.
\newblock In {\em Proceedings of Sixth European Workshop on Probabilistic
  Graphical Models}, 2012.

\bibitem{Hoyer2008}
Patrik~O Hoyer, Dominik Janzing, Joris Mooij, Jonas Peters, and Bernhard
  Sch\"{o}lkopf.
\newblock Nonlinear causal discovery with additive noise models.
\newblock In {\em Proceedings of NIPS 2008}, 2008.

\bibitem{Hyttinen2013}
Antti Hyttinen, Frederick Eberhardt, and Patrik Hoyer.
\newblock Experiment selection for causal discovery.
\newblock {\em Journal of Machine Learning Research}, 14:3041--3071, 2013.

\bibitem{Janzing2012}
Dominik Janzing, Joris Mooij, Kun Zhang, Jan Lemeire, Jakob Zscheischler,
  Povilas Daniušis, Bastian Steudel, and Bernhard Sch\"olkopf.
\newblock Information-geometric approach to inferring causal directions.
\newblock {\em Artificial Intelligence}, 182-183:1--31, 2012.

\bibitem{Kocaoglu2017}
Murat Kocaoglu, Alexandros~G. Dimakis, Sriram Vishwanath, and Babak Hassibi.
\newblock Entropic causal inference.
\newblock In {\em AAAI'17}, 2017.

\bibitem{Kontoyiannis2016}
Ioannis Kontoyiannis and Maria Skoularidou.
\newblock Estimating the directed information and testing for causality.
\newblock {\em IEEE Trans. Inf. Theory}, 62:6053--6067, Aug. 2016.

\bibitem{Paz2015a}
David Lopez-Paz, Krikamol Muandet, Bernhard Sch\"olkopf, and Ilya Tolstikhin.
\newblock Towards a learning theory of cause-effect inference.
\newblock In {\em Proceedings of ICML 2015}, 2015.

\bibitem{Cheng1984}
Cai Mao-Cheng.
\newblock On separating systems of graphs.
\newblock {\em Discrete Mathematics}, 49:15--20, 1984.

\bibitem{Meek1995a}
Christopher Meek.
\newblock Causal inference and causal explanation with background knowledge.
\newblock In {\em Proceedings of the eleventh international conference on
  uncertainty in artificial intelligence}, 1995.

\bibitem{Meek1995b}
Christopher Meek.
\newblock Strong completeness and faithfulness in bayesian networks.
\newblock In {\em Proceedings of the eleventh international conference on
  uncertainty in artificial intelligence}, 1995.

\bibitem{Pearl2009}
Judea Pearl.
\newblock {\em Causality: Models, Reasoning and Inference}.
\newblock Cambridge University Press, 2009.

\bibitem{Peters2014}
Jonas Peters and Peter B\"{u}hlman.
\newblock Identifiability of gaussian structural equation models with equal
  error variances.
\newblock {\em Biometrika}, 101:219--228, 2014.

\bibitem{Peters2016}
Jonas Peters, Peter Bühlmann, and Nicolai Meinshausen.
\newblock Causal inference using invariant prediction: identification and
  confidence intervals.
\newblock {\em Statistical Methodology, Series B}, 78:947 -- 1012, 2016.

\bibitem{Quinn2015}
Christopher Quinn, Negar Kiyavash, and Todd Coleman.
\newblock Directed information graphs.
\newblock {\em IEEE Trans. Inf. Theory}, 61:6887--6909, Dec. 2015.

\bibitem{Raginsky2011}
Maxim Raginsky.
\newblock Directed information and pearl's causal calculus.
\newblock In {\em Proc. 49th Annual Allerton Conf. on Communication, Control
  and Computing}, 2011.

\bibitem{Ramsey2010}
Joseph D.~Ramsey Ramsey, Stephen~Jos\'{e} Hanson, Catherine Hanson, Yaroslav~O.
  Halchenko, Russell Poldrack, and Clark Glymour.
\newblock Six problems for causal inference from fmri.
\newblock {\em NeuroImage (Elsevier)}, 49:1545--1558, 2010.

\bibitem{Shanmugam2015}
Karthikeyan Shanmugam, Murat Kocaoglu, Alex Dimakis, and Sriram Vishwanath.
\newblock Learning causal graphs with small interventions.
\newblock In {\em NIPS 2015}, 2015.

\bibitem{Shimizu2006}
S~Shimizu, P.~O Hoyer, A~Hyvarinen, and A.~J Kerminen.
\newblock A linear non-gaussian acyclic model for causal discovery.
\newblock {\em Journal of Machine Learning Research}, 7:2003--–2030, 2006.

\bibitem{Spirtes2001}
Peter Spirtes, Clark Glymour, and Richard Scheines.
\newblock {\em Causation, Prediction, and Search}.
\newblock A Bradford Book, 2001.

\bibitem{Uhler2013}
Caroline Uhler, Garvesh Raskutti, Peter Bühlmann, and Bin Yu.
\newblock Geometry of the faithfulness assumption in causal inference.
\newblock {\em Annals of Statistics}, 41:436--463, 2013.

\bibitem{Yannakakis1987}
Mihalis Yannakakis and Fanica Gavril.
\newblock The maximum k-colorable subgraph problem for chordal graphs.
\newblock {\em Information Processing Letters}, 24:133--137, 1987.

\bibitem{Ziebart2013}
Brian~D. Ziebart, J.~Andrew Bagnell, and Anind~K. Dey.
\newblock The principle of maximum causal entropy for estimating interacting
  processes.
\newblock {\em IEEE Transactions on Information Theory}, 59:1966 -- 1980, 2013.

\end{thebibliography}
}
\newpage
\section{Appendix}

\subsection*{Proof of Theorem  \ref{thm:SepSysNecessary}}
One direction is trivial: Consider a $(G,\mathcal{C})$ separating system. For every edge there is an intervention where only one endpoint is intervened. This edge is in the cut and learned. Constraints are over the subsets of the graph separating system, which directly correspond to interventions. Hence interventions also obey the constraint $\mathcal{C}$. For the other direction, we use the following observation from \cite{Shanmugam2015}, which is implicit in the proof of Theroem 6 in \cite{Shanmugam2015}.
\begin{lemma}
\label{lem:shanmugam}
Let $G$ be an undirected chordal graph. Consider any clique $C$ of $G$. There is a directed graph $D$ with skeleton $G$ with no immoralities such that, the vertices $C$ come before any other vertices in the partial order that determine $D$. If this $D$ is the underlying causal graph, knowing the causal edges outside this clique does not help identify any edges within the clique.
\end{lemma}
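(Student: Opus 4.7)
The plan is to construct $D$ from a clique tree of $G$ rooted at a maximal clique containing $C$, and then to prove the non-identifiability claim by exhibiting a family of DAGs that all agree with $D$ on every edge outside $C$ but realize every possible acyclic tournament inside $C$.

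Because $G$ is chordal, it admits a clique tree $T$ on its maximal cliques satisfying the running intersection property. Pick any maximal clique $K \supseteq C$ (such a $K$ exists since $C$ is itself a clique) and root $T$ at $K$. Construct a linear order $\pi$ of $V$ as follows: list the vertices of $C$ first in an arbitrary order, then $K \setminus C$ in an arbitrary order; then traverse $T$ in BFS order starting at $K$, and whenever a non-root clique $K'$ with parent $K''$ is first visited, append to $\pi$ the vertices of $K'$ that have not yet been listed. Define $D$ by orienting each edge of $G$ from its earlier endpoint to its later endpoint in $\pi$. By construction, $C$ precedes every other vertex in the topological order of $D$. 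To check that $D$ has no immoralities I will verify that $Pa_D(v)$ is a clique for every $v$: if $v \in K$ then $Pa_D(v) \subseteq K$ is automatically a clique, and if $v$ is first placed at some non-root $K'$ with parent $K''$, the running intersection property forces every neighbor of $v$ that appears earlier in $\pi$ to lie in $K' \cap K''$, so $Pa_D(v) = K' \cap K''$, which is a subclique of $K'$.

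For the non-identifiability part, consider the family $\{D_\sigma\}$ obtained from $D$ by keeping $\pi$ unchanged except that the initial $C$-block is reordered according to an arbitrary permutation $\sigma$ of $C$; as $\sigma$ ranges over all permutations, the restriction of $D_\sigma$ to $C$ ranges over every acyclic tournament on $C$. Each $D_\sigma$ still has $C$ at the top of the topological order, so every edge between $C$ and $V \setminus C$ is still oriented outward from $C$ and every edge internal to $V \setminus C$ keeps the same orientation as in $D$; thus all $D_\sigma$ coincide on the set of edges with at least one endpoint outside $C$. The no-immorality argument also carries over verbatim: for $v \in V \setminus K$, $Pa_{D_\sigma}(v) = K' \cap K''$ is independent of $\sigma$, and for $v \in K$ we again have $Pa_{D_\sigma}(v) \subseteq K$, a clique. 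Hence every $D_\sigma$ is a valid DAG with skeleton $G$ and no immoralities.

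The main obstacle and final step is to convert this family into a genuine non-identifiability statement. Fix the outside-$C$ orientation inherited from $D$; every $D_\sigma$ is then consistent with it, but since the family realizes every linear order on $C$, any specific within-$C$ edge $uv$ appears oriented $u \to v$ in some $D_\sigma$ and $v \to u$ in another. Therefore no within-$C$ edge is pinned down by the outside data. As a sanity check one can also verify this at the Meek-rule level: after fixing the outside orientations, no vertex of $C$ has a directed incoming edge in the partially oriented graph (edges from $V \setminus C$ into $C$ do not exist, and the within-$C$ edges are still undirected), so the preconditions of Meek's rules R1--R3 can never be met on a within-$C$ edge. This completes the proof.
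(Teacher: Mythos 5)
Your proof is correct in substance, and it supplies considerably more than the paper does: the paper never proves this lemma, but imports it as an observation ``implicit in the proof of Theorem 6'' of \cite{Shanmugam2015}. Your route --- root a clique tree at a maximal clique $K\supseteq C$, list $C$ first, orient every edge along the resulting order, and verify that every parent set is a clique --- is a clean, self-contained construction (it effectively builds a reverse perfect elimination ordering that starts inside $K$), and the second half, exhibiting the family $\{D_\sigma\}$ that agrees on every edge with an endpoint outside $C$ while realizing both orientations of every edge inside $C$, is exactly the right way to turn the construction into a genuine non-identifiability statement; the Meek-rule sanity check is a welcome confirmation. One small imprecision: for a vertex $v$ first introduced at a non-root clique $K'$ with parent $K''$ you assert $Pa_D(v)=K'\cap K''$. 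If $K'$ introduces several new vertices, the ones listed before $v$ are also parents of $v$ and need not lie in $K''$, so the equality can fail; the correct claim is $Pa_D(v)\subseteq K'$, which is all you need since $K'$ is a clique. Your running-intersection argument does correctly rule out earlier neighbours of $v$ lying outside $K'$ (such a neighbour would have to belong to every clique on the tree path from its introducing clique to the clique witnessing the edge, and that path passes through $K'$ and $K''$), so the no-immorality conclusion stands.
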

The lemma essentially states that, Meek rules do not aid in identifying the edges within a clique, if the clique vertices come before any other vertex in the partial order of the underlying causal DAG. 

Assume that there is an edge that is not separated by the set of interventions. If the underlying causal DAG has partial order that starts with the nodes at the endpoints of this edge, then knowing every other edge does not help learn the direction of this edge by Lemma \ref{lem:shanmugam} (notice that an edge is a clique of size 2). Thus this set of interventions cannot learn every causal graph with the given skeleton.

\subsection*{Proof of Theorem \ref{thm:unbounded}}
Consider the graph separating system matrix $\mat{M}$: Let $\mat{M}\in\{0,1\}^{n,m}$ be a 0-1 matrix, where $\mat{M}(i,:)\neq \mat{M}(j,:), \forall (i,j)\in E$. Since every set of interventions must be a graph separating system by Theorem \ref{thm:SepSysNecessary}, we can work with the corresponding graph separating system matrices. Notice that any graph separating system corresponds to some proper coloring due to \ref{lem:cai}. Thus, any set of vertices that has identical rows in $\mat{M}$ should be within the same color class. We know in any proper coloring, each color class is an independent set. Then, over all proper colorings, the color class with maximum weight is given by the maximum weighted independent set. Since each row of $\mat{M}$ is either the all-zero vector, or contains at least a single 1, the total cost is minimized by assigning the all-zero vector to the vertices belonging to the maximum weighted independent set, and using distinct weight-1 vectors for the remaining rows. The induced graph on the vertices outside the maximum weighted independent set is still chordal and has the chromatic number at most $\chi$. Thus, we need an $n\times \chi$ matrix $\mat{M}$, hence $\chi$ experiments in total to minimize the total intervention cost. 

\subsection*{Proof of Theorem \ref{thm:search}}
In this section, we show that we can write the total cost of the interventions constructed by a given graph coloring can be written as a linear objective in terms of $x_{i,k}$. 

First, we illustrate the cost incurred by a given separating system. Consider the color separating system in Figure \ref{fig:sepsys}. Notice that the rows of $M$ that correspond to vertices within a fixed color class are the same. For example $S=\{U2,U4\}$ is a color class, and both rows are $[0, 1]$. Recall that the columns where a particular row is 1 indicate the interventions which contain that variable. The cost incurred by any vertex is the number of times the vertex is intervened on times the cost of intervening on that vertex. The cost incurred by a set of vertices is the sum of the cost incurred by each vertex within the set. Vertices within a color class are intervened on the same number of times since they have the same rows in the separating system matrix $\mat{M}$. Thus, the cost of a color class $S$ is given by $cost(S)=|r_S|_1\sum_{i\in S}w_i$, where $r_S$ is the row of any node from color class $S$ in $M$, and $|r_S|_1$ is the number of 1s in $r_S$. 

Notice that the exact labeling of rows do not matter for the separating system: We only need vertices with different colors to correspond to different rows. Since the cost of a color class is proportional to the number of 1s in its row vector, an optimum graph separating system given a coloring should assing vectors with smaller weight if possible, in order to minimize the total cost. Hence, in Figure \ref{fig:sepsys}, instead of assigning $[1,1]$ as the characteristic vector of $S$, we can assign $[0,1]$ without affecting the separating system property. Since 3 colors are sufficient, we do not need to use $[1,1]$ vector.


\begin{figure*}[ht!]
\vskip 0.2in
\begin{center}
\subfigure[n=20]{\label{fig:Unifn20}\includegraphics[width=0.3\textwidth]{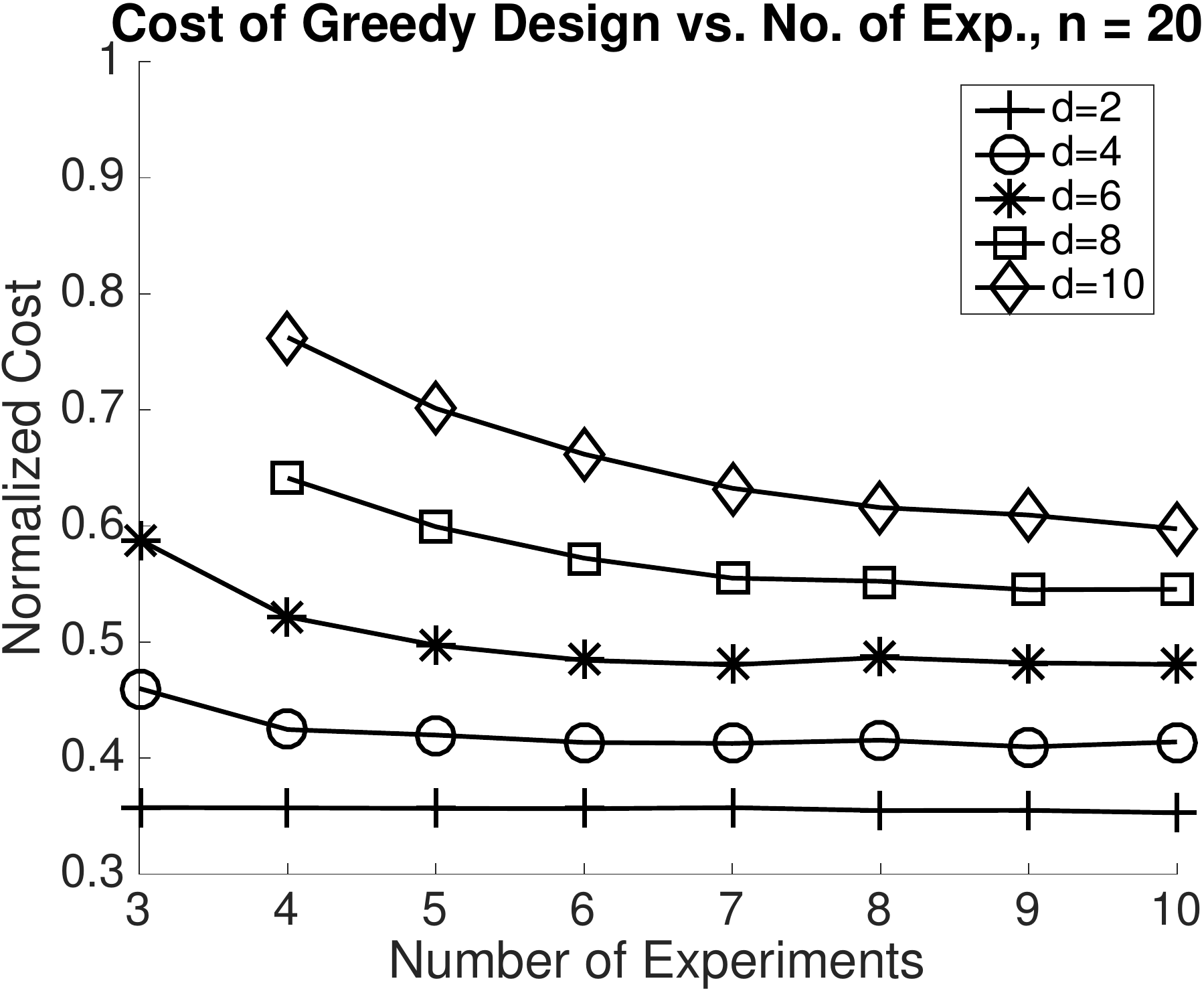}}
\subfigure[n=50]{\label{fig:Unifn50}\includegraphics[width=0.3\textwidth]{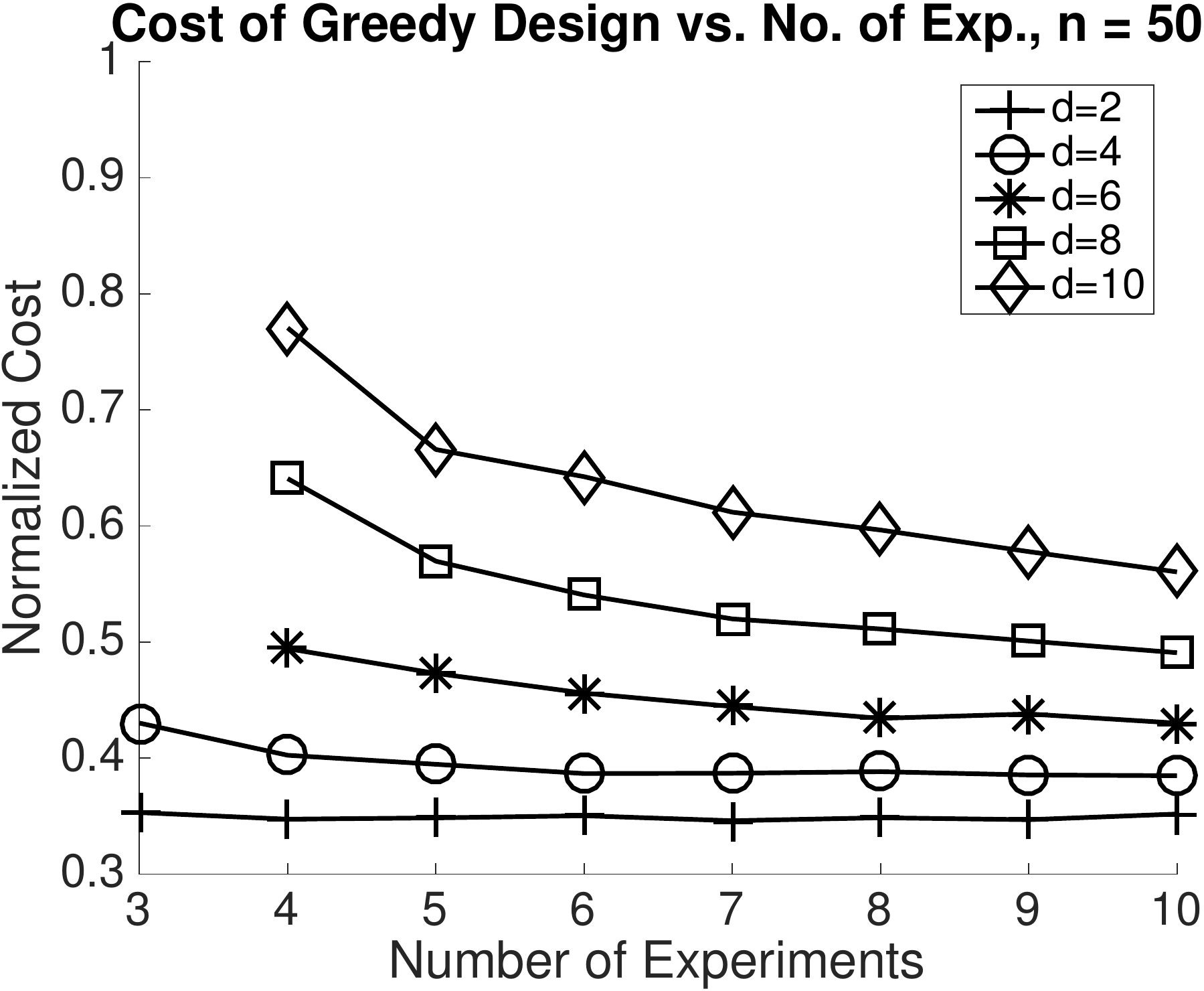}}
\subfigure[n=100]{\label{fig:Unifn100}\includegraphics[width=0.3\textwidth]{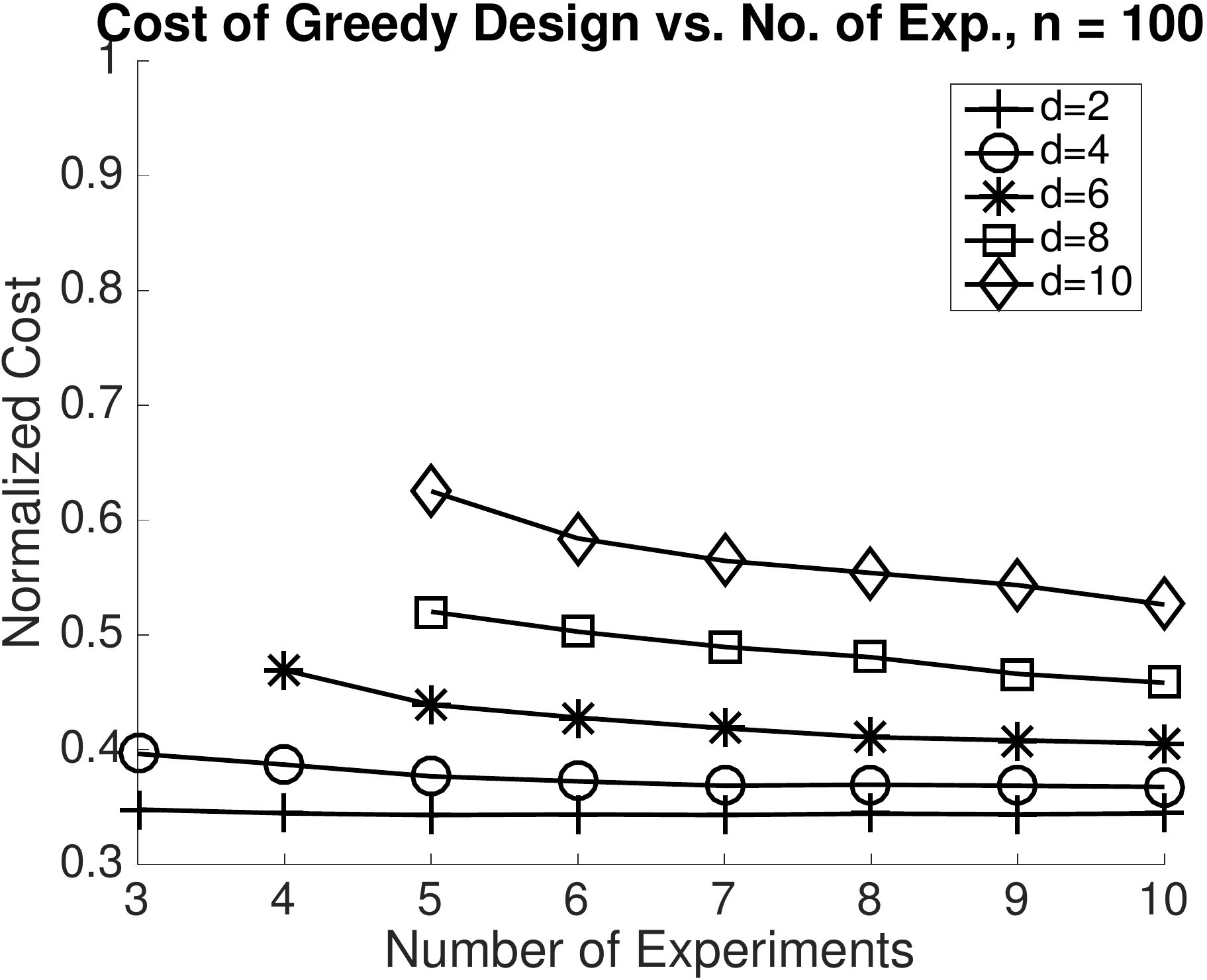}}
\caption{Uniform weights $w_i\sim \mathcal{U}[0,2]$. $n$: no. of vertices, $d$: Sparsity parameter of the chordal graph. Each datapoint is the average cost incurred by the greedy intervention design over 1000 randomly sampled causal graphs for a given number of experiments. The expected average cost of all the edges is $\E[w_i]=1$. The cost incurred by the intervention design is normalized by $n$. As observed, the cost incurred increases gradually as the number of experiments are reduced, or graph becomes denser. For sparse graphs, proposed construction incurs low cost even for up to 3 experiments.}
\label{fig:simulation1}
\end{center}
\vskip -0.2in
\end{figure*}

In general, given a number of interventions $m$, we need to construct a set of coloring labels to assign to each color. Suppose the causal graph has $n$ variables. If $m\leq\log{n}$, then every length-$m$ binary vector should be available, since the number of colors can be up to $n$. If $m>\log{n}$, using all labels give more number of colors than we can use to search over all proper colorings. Hence, in this case, we choose the labels with smallest weight until we find $n$ coloring labels. This ensures that the integer programming formulation does not have exponentially many variables, even when number of interventions is allowed to be $n$. Thus we construct a $\mat{b}$ vector, to be used as the weight of color labels as follows:

\begin{equation}
\mat{b} = [0, 1, 1, \hdots, 1, 2, 2, \hdots, 2,3, 3,\hdots, p, p, \hdots, p],
\end{equation}
where $p$ is such that $\sum_{i=0}^{p-1}\binom{m}{i}<\min{(2^m,n)}$ and $\sum_{i=0}^{p}\binom{m}{i}\geq \min{(2^m,n)}$. $i$ appears $\binom{m}{i}$ times if $i<p$ and $\min{2^m,n}-\sum_{j=0}^{p-1}\binom{m}{j}$ times if $i=p$. For notational convenience, let $t\coloneqq \min{(2^m,n)}$.

Standard coloring formulation assigns a variable $x_{i,j}$ to every node $i$ and color $j$: $x_{i,j}=1$ if node $i$ is colored with color $j$, and 0 otherwise. Each vertex is assigned a single color. Every pair of adjacent vertices are assigned different colors, which can be captured by $x_{i,k}+x_{j,k}\leq 1,\forall (i,j)\in E, \forall k\in [t]$. Then, using this standard coloring formulation, we can write our optimization problem as follows:
\begin{equation}
\begin{aligned}
&{\text{min}}  & & \sum_{j=0}^{t}\sum_{i=1}^n{w_ix_{i,j}\mat{b}(j)} \\
& \text{s. t.}
& &\sum_{j=1}^m x_{i,j} = 1,\forall i\in [n]\\
& & & x_{i,k}+x_{j,k}\leq 1 \forall (i,j)\in E,\forall k\in [t] \\
& & & x_{i,j}\in \{0,1\}\label{eq:mincostOptModular}
\end{aligned}
\end{equation}

\subsection*{Uniquely Colorable Graphs}
Next, we give a special case, which admits a simple solution without restricting the graph class. Suppose $G$ is uniquely $2^m-$ colorable, where $m$ is the maximum number of interventions we are allowed to use. Then there is only a single coloring up to permutations of colors. Hence the costs of color classes are fixed. Now we can simply sort the color classes in the order of decreasing cost, and assign row vectors of $\mat{M}$ to these color classes in the order of increasing number of 1s. This assures that the total cost of interventions is minimized.

\subsection*{Implementation Details}
First, we need to define a perfect elimination ordering:
\begin{definition}
A perfect elimination ordering (PEO) $\sigma_p=\{v_1,v_2 \ldots v_n\}$ on the vertices of an undirected chordal graph $G$ is such that for all $i$, the induced neighborhood of $v_i$ on the subgraph formed by $\{v_1,v_2 \ldots v_{i-1} \}$ is a clique.
\end{definition}

It is known that an undirected graph is chordal if and only if it has a perfect elimination ordering. We use this fact to generate chordal graphs based on a randomly chosen perfect elimination ordering: First we choose a random permutation to be the perfect elimination ordering for the chordal graph. Then the $i^{th}$ vertex is connected to each node in $S_i = \{j: j<i \text{ with respect to PEO}\}$ with respect to the PEO independently randomly with probability $\left(\frac{d}{i}\right)^{2/3}$. A random vertex from $S_i$ is chosen to be a parent of $i$ with probability 1 to keep the graph connected. The parent set are connected to each other to assure the ordering is a PEO.

\subsection*{Frank's Algorithm}
Consider a PEO $\sigma=\{v_n,v_{n-1},\hdots,v_1\}$. At step $i$, skip the vertex $v_i$ if it has weight $w_i=0$. Otherwise, mark it red and reduce the weight of all its neighbors that are before $v_i$ in the PEO by $w_i$, and set $w_i=0$. After $n$ steps, we have a set of vertices colored red. Parse this set in the order of $\sigma$ and convert a red vertex to blue if it does not have any neighbor $j<i$ in $\sigma$ which is already colored blue. \cite{Frank1975} proves that this algorithm outputs the maximum weighted independent set. 

\subsection*{Additional Simulations}
In this section we provide additional simulations for when the graph weights are uniformly distributed $w_i\sim \mathcal{U}[0,2]$. The results are given in Figure \ref{fig:simulation1}. Similar to the exponentially distributed weigths, the main factor determining the cost is the graph spartiy, which is captured by parameter $d$.

\end{document}